\documentclass{article}

\PassOptionsToPackage{numbers,compress}{natbib}
\usepackage[preprint]{neurips_2026}

\usepackage[utf8]{inputenc} % allow utf-8 input
\usepackage[T1]{fontenc}    % use 8-bit T1 fonts
\usepackage{hyperref}       % hyperlinks
\usepackage{url}            % simple URL typesetting
\usepackage{booktabs}       % professional-quality tables
\usepackage{amsfonts}       % blackboard math symbols
\usepackage{nicefrac}       % compact symbols for 1/2, etc.
\usepackage{microtype}      % microtypography
\usepackage{xcolor}         % colors

\usepackage{threeparttable}
\usepackage{amsmath,amssymb,amsthm}
\usepackage{graphicx}
\usepackage{accents}
\usepackage{bm}
\usepackage{tcolorbox}
\usepackage{algorithm}
\usepackage{algpseudocode}
\algnotext{EndFor}
\usepackage{enumitem}
\usepackage{float}

\newtheorem{proposition}{Proposition}
\newtheorem{lemma}{Lemma}

\title{Large Language Models for Sequential Decision-Making: Improving In-Context Learning via Supervised Fine-Tuning}

\author{%
  Minmin Zhang \\
  Harvard University \\
  \texttt{minminzhang@hks.harvard.edu}
  \And
  Sina Aghaei \\
  Harvard University \\
  \texttt{sina.aghaei1994@gmail.com}
  \And
  Soroush Saghafian \\
  Harvard University \\
  \texttt{soroush\_saghafian@hks.harvard.edu}
}

\begin{document}

\maketitle

\begin{abstract}
Large language models (LLMs) have shown remarkable in-context learning (ICL) capabilities, yet their potential for sequential decision-making remains underexplored. In this paper, we study the ICL capabilities of LLMs in sequential decision-making settings, including Markov Decision Processes (MDPs), Partially Observable MDPs (POMDPs), and Ambiguous POMDPs (APOMDPs). We fine-tune pretrained LLMs to perform few-shot decision-making directly from offline, oracle-labeled trajectories. Our framework enables flexible imitation of policies through supervised fine-tuning (SFT). Theoretically, we focus on linear MDPs and interpret a fine-tuned attention layer as implicitly estimating optimal Q-functions from in-context data. Building on this interpretation, we derive an end-to-end suboptimality bound for the induced policy that separates the in-context estimation error from the training-length bias. Empirically, across synthetic MDP, POMDP, and APOMDP settings, we find that fine-tuned LLMs achieve substantially smaller optimality gaps than in-context-only and random baselines, with especially large gains in longer-horizon, partially observed, and model-ambiguous environments. Together, these results show that supervised fine-tuning provides an effective route to endowing pretrained LLMs with sequential decision-making capabilities from offline data, which is an important advantage in domains such as healthcare where offline data are abundant.

\end{abstract}

%%%%%%%%%%%%%%%%%%%%%%%%%%%%%%%%%%%%%%%%%%%%%%%%%%%%%%
\section{Introduction}\label{sec:intro}
%%%%%%%%%%%%%%%%%%%%%%%%%%%%%%%%%%%%%%%%%%%%%%%%%%%%%%

Sequential decision-making under uncertainty is central to many high-stakes domains, from clinical treatment planning to resource management. In these settings, an agent must choose a sequence of actions over time to maximize cumulative rewards, using only observational data collected under some historical policy. Dynamic Treatment Regimes (DTRs) provide a principled framework for discovering effective policies from such data~\cite{robins1986new, murphy2003optimal}. However, standard DTR approaches rely on the strong assumption that all confounders are observable or that their effects can be ignored (sequential ignorability), which is often violated in practice. To address this challenge, Ambiguous Dynamic Treatment Regimes (ADTRs) account for unobservable confounders by evaluating policies against a set of plausible data-generating models (model ambiguity) rather than a single one~\cite{saghafian2018ambiguous, saghafian2023ambiguous, saghafian2025insight}.
Furthermore, previous studies such as~\cite{saghafian2023ambiguous} connect ADTRs to APOMDPs and develop reinforcement learning (RL) approaches to find the best sequence of actions. Compared with MDPs or POMDPs, APOMDPs provide a more realistic framework for learning sequential decision-making from observational data in the presence of unobservable confounders by directly considering the underlying model ambiguity.

Large language models have shown remarkable in-context learning capabilities. Given a small set of demonstrations as the input prompt, a pretrained LLM can make predictions on new instances without updating its parameters~\cite{brown2020language}. This ability has been studied in supervised learning~\cite{garg2022can, zhang2023trained} and, more recently, in sequential decision-making settings~\cite{DT2021chen, AD2022laskin, DPT2023lee}. However, prior work on in-context learning for sequential decision-making mainly trains transformer architectures from scratch, focuses on fully observable MDPs, and does not consider the real-world challenges caused by partial observability or model ambiguity.

In this study, we adapt supervised fine-tuning of pretrained LLMs to sequential decision-making in MDPs, POMDPs, and APOMDPs. Rather than training a transformer from scratch, we fine-tune an open-source pretrained LLM so that it can first read a few demonstration trajectories and then act in new, unseen tasks without any further parameter updates. Building on pretrained LLMs leverages the broad representational capacity acquired during pretraining, which may improve generalization across diverse tasks. Starting from pretrained LLMs also provides an avenue for integrating diverse inputs and human intuition inclusion in real-world sequential decision-making applications, augmenting the capabilities of human-AI centaurs~\cite{saghafian2025insight,saghafian2024effective}. Our main contributions in this work are as follows.

\begin{itemize}[leftmargin=*]
    \item \textbf{SFT framework for in-context sequential decision-making.} We fine-tune a pretrained LLM on offline trajectories so that it can make sequential decisions in new tasks given only a few demonstrations at test time. The framework uses parameter-efficient adapters and trains on serialized trajectories. At test time, this approach allows the model to read few-shot demonstrations and act in new tasks without any further parameter updates.

    \item \textbf{Offline data use with oracle supervision.} Our framework leverages offline datasets along with an optimal (or near-optimal) action oracle to synthesize high-quality trajectories for fine-tuning. This practice is particularly appealing in domains such as healthcare, where experimentation is costly (or unethical) and model-based or expert-derived supervision can often be constructed from logged observational data.
    
    \item \textbf{Theoretical analysis in linear MDPs.} Adopting the in-context linear-regression framework of \cite{zhang2023trained}, we interpret a trained linear self-attention layer as implicitly estimating the parameters of the optimal Q-function for a new task. Our theoretical contribution (a) makes use of this prediction-level interpretation to create an end-to-end suboptimality bound for the induced policy and (b) decomposes the bound into an in-context estimation error (decreasing in the number of support trajectories) and a training-length bias (decreasing in the number of training trajectories).

    \item \textbf{Empirical evaluation across MDPs, POMDPs, and APOMDPs.} We show that fine-tuned LLMs achieve low optimality gaps and consistently outperform both random policies and ICL-only baselines. Specifically, we find that fine-tuning roughly halves the optimality gap in long-horizon MDPs. In POMDPs and APOMDPs, fine-tuned LLMs achieve large gains even under partial observability and model ambiguity, suggesting robust performance for our approach across different settings.

\end{itemize}

\section{Related work}

\paragraph{In-context prediction.} Existing research has investigated the ICL capabilities of LLMs in classification and regression tasks and shown that transformer-based architectures can learn general function classes when trained on in-context prompts~\cite{garg2022can, zhang2023trained, mirchandani2023large, zhao2024probing}. Some studies (see, e.g.,~\cite{zhang2023trained}) provide theoretical guarantees for learning linear regression in-context. LLMs have also been applied to time series analysis (see, e.g., \cite{zhang2024large} for a detailed survey).

\paragraph{LLMs as decision-making agents.} Some studies have explored using LLMs directly for decision-making through prompting. \cite{yao2023react} generates both reasoning traces and task-specific actions in an interleaved manner, and \cite{huang2022inner} uses language feedback for embodied planning. \cite{reed2022generalist} trains a single generalist transformer across vision, language, and control tasks. These approaches either use pretrained models directly or train models from scratch. In contrast, we fine-tune pretrained LLMs for few-shot sequential decision-making.

\paragraph{In-context decision-making.} Decision Transformer was among the first to cast RL as sequence modeling, training a transformer to predict actions conditioned on past states, actions, and a target return~\cite{DT2021chen}. Algorithm Distillation instead trains transformers to imitate the behavior of RL algorithms from their learning histories~\cite{AD2022laskin}. Decision-Pretrained Transformer (DPT) builds on these ideas and trains a transformer to imitate an optimal action oracle given in-context data~\cite{DPT2023lee}. Pretrained LLM representations are also used as input to added layers for decision-making under partial observability~\cite{li2022pre}, with partial observability referring to limited visual input rather than noisy observations of latent states. A survey of related studies in this stream can be found in~\cite{cao2024survey}.

Our study differs from these approaches in two key aspects. First, we fine-tune a pretrained LLM rather than training a transformer from scratch, preserving the representational capacity acquired during pretraining. Second, we extend the evaluation from MDPs to POMDPs and APOMDPs, more general settings with partial observability and model ambiguity that have not been fully studied in the ICL decision-making literature and are more appropriate for many applications. We provide more discussion of related studies in RL, causal inference, and LLMs in Appendix~\ref{sec:appendix_additional_related_work}.

% %%%%%%%%%%%%%%%%%%%%%%%%%%%%%%%%%%%%%%%%%%%%%%%%%%%%%%
\section{Problem setup and LLMs}\label{sec:problem_setup_and_llms}
% %%%%%%%%%%%%%%%%%%%%%%%%%%%%%%%%%%%%%%%%%%%%%%%%%%%%%%

We first present a general framework for sequential decision-making that covers MDPs, POMDPs, and APOMDPs as special cases. We then describe two approaches to preparing LLMs for sequential decision-making problems.

\subsection{Sequential decision-making}

We consider a setup in which an agent makes a sequence of decisions over a finite horizon $T_\tau$ for each task $\tau$ drawn from a distribution $\mathcal{T}$.
A task $\tau$ may vary across draws in its transition dynamics, reward function, and state, action, and reward spaces.
For a fixed task $\tau$, let $\mathcal{X}_{\tau}$, $\mathcal{A}_{\tau}$, and $\mathcal{R}_{\tau}$ denote the observation, action, and reward spaces, respectively.

At the beginning of each period $t \in \{1, \ldots, T_\tau\}$, the agent first observes $X_t \in \mathcal{X}_{\tau}$, then chooses an action $a_t \in \mathcal{A}_{\tau}$, after which a reward $r_t \in \mathcal{R}_{\tau}$ is realized. The observable history before action $a_t$ is $H_t \triangleq (X_1, a_1, r_1, \ldots, X_{t-1}, a_{t-1}, r_{t-1}, X_t)$.
A policy is a sequence $\pi_\tau = (\pi_{\tau,t})_{t=1}^{T_\tau}$, where $\pi_{\tau,t}(\cdot \mid H_t) \in \Delta(\mathcal{A}_{\tau})$ maps the observable history to a distribution over actions and we denote by $\Delta(\mathcal{A}_{\tau})$ the probability simplex over the action space $\mathcal{A}_{\tau}$.
Given a discount factor $\gamma \in (0,1]$, the performance is $J_{\tau}(\pi_\tau) \triangleq \mathbb{E}[\sum_{t=1}^{T_\tau} \gamma^{t-1} r_t ]$. The goal is to find a policy that maximizes this performance. In what follows, we suppress the subscript $\tau$ when the task is clear from context.

\subsection{Special cases}\label{sec:special_cases}

We discuss three special cases under the assumption that transition dynamics are Markovian. We provide the formal definitions and detailed notation for these cases in Appendix~\ref{sec:appendix_standard_definitions}.

\paragraph{MDP [full observability].}
In an MDP task $\mathcal{E}^{\mathrm{MDP}}_\tau = \langle \mathcal{S}_\tau, \mathcal{A}_\tau, P_\tau, R_\tau, \rho_\tau, T_\tau, \gamma \rangle$, the state $S_t \in \mathcal{S}_\tau$ is fully observable (i.e., $X_t = S_t$). We denote by $P_\tau(\cdot \mid s,a)$ the state-transition kernel, $R_\tau(s,a)$ the one-step reward, and $\rho_\tau$ the initial-state distribution. The Markov property allows restriction to policies $\pi_t(a \mid S_t)$.

\paragraph{POMDP [partial observability].}
When we observe only $O_t \in \mathcal{O}_\tau$ generated from a latent state $S_t \in \mathcal{S}_\tau$ via an observation kernel $Q_\tau(o \mid s, a)$, we set $X_t = O_t$. We maintain a belief vector $b_t \in \Delta(\mathcal{S}_\tau)$ updated via a Bayes operator $B_{\tau}$. The belief vector serves as a sufficient statistic for decision-making, which means we can restrict attention to belief-based policies $\pi_t(a \mid b_t)$.

\paragraph{APOMDP [model ambiguity].}
Using a single transition or observation kernel is often unreliable for real-world data. 
APOMDPs address this problem by equipping the task with an ambiguity set $\mathcal{M}_\tau$ of plausible models $m$, each specifying kernels $P_\tau^m$ and $Q_\tau^m$:
\[
\mathcal{E}^{\mathrm{APOMDP}}_\tau
=\big\langle \mathcal{S}_\tau,\; \mathcal{A}_\tau,\; \mathcal{O}_\tau,\; \mathcal{M}_\tau,\; P_\tau^m,\; Q_\tau^m,\; R_\tau,\; \rho_\tau,\; T_\tau,\; \gamma,\; \alpha \big\rangle,
\]
where $\alpha$ tunes pessimism~\cite{saghafian2018ambiguous, saghafian2023ambiguous}. One can derive an optimal policy that hedges against model ambiguity by solving the Bellman equation $V_t(b_t, \alpha) = \max_{a \in \mathcal{A}_\tau} \{U_t(b_t, \alpha, a) \}$, where
\[
U_t(b_t, \alpha, a) = b_t r_t + \alpha \gamma \min_{m \in \mathcal{M}_\tau} \{\mathcal{H}_{t+1}(b_t,\alpha, a, m)\} + (1-\alpha) \gamma \max_{m \in \mathcal{M}_\tau} \{\mathcal{H}_{t+1}(b_t,\alpha, a, m)\},
\]
and $\mathcal{H}_{t+1}(b,\alpha, a, m) \triangleq \sum_{o \in \mathcal{O}_\tau} \mathbb{P}(o | b,a,m)V_{t+1}(B_{\tau}(b, a, o,m), \alpha)$~\cite{saghafian2018ambiguous}. Setting $\alpha=1$ yields the pessimistic maximin criterion and $\alpha=0$ the optimistic maximax~\cite{saghafian2018ambiguous,saghafian2023ambiguous}.
APOMDPs generalize both MDPs and POMDPs by incorporating model ambiguity. When $\mathcal{M}_\tau$ is a singleton, MDPs arise under full observability and POMDPs arise under partial observability.

\subsection{LLMs for sequential decision-making}

We formulate sequential decision-making as an optimization problem in the settings mentioned above. An alternative framework is to recast the problem as a sequence-generation problem. Instead of directly optimizing the performance, an LLM learns a predictor of actions given the history $H_t$. Training proceeds over trajectories from diverse tasks $\tau$, and at test time the LLM generates actions in new tasks. This framework connects RL-style optimization with generative modeling, in which we view the policy as an inference mechanism that generates optimal actions conditioned on observations. Figure~\ref{figure:flowdiagram} illustrates the overall framework. In what follows, we consider two approaches, in-context learning and supervised fine-tuning, to prepare LLMs for sequential decision-making.

\begin{figure}[thbp]
\centering
\includegraphics[width=0.8\textwidth]{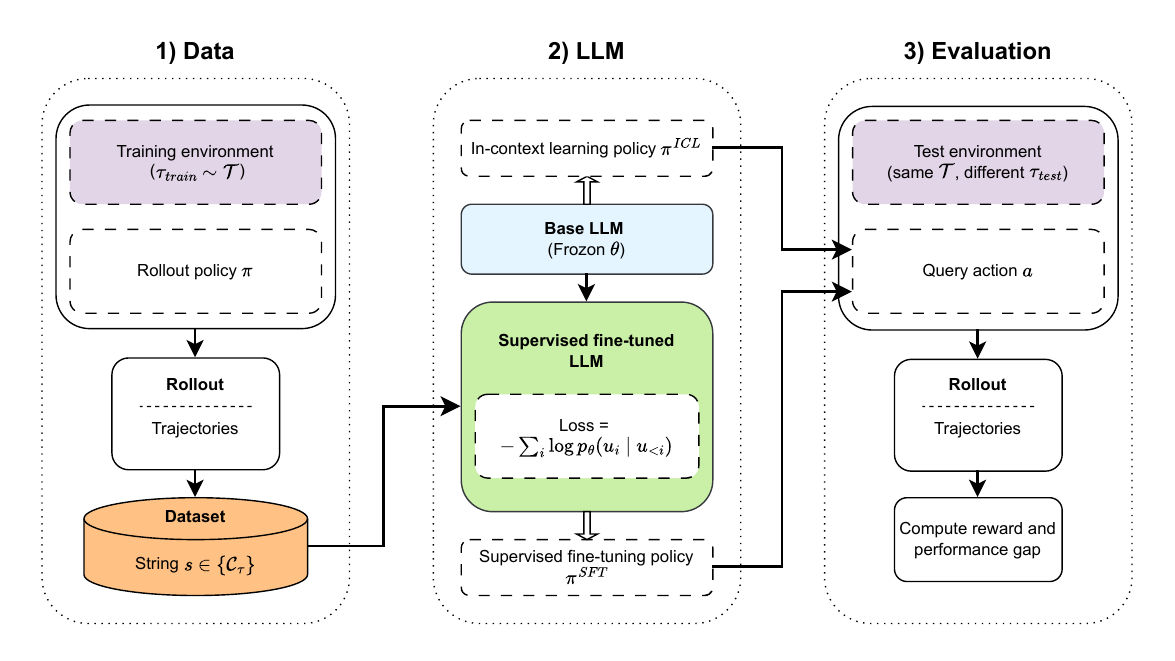}
\caption{Overview of data construction, policy generation, and performance evaluation.}
\label{figure:flowdiagram}
\end{figure}

\paragraph{In-context learning.}
ICL treats an LLM as a policy generator that adapts at inference time by conditioning on demonstrations in the prompt. We first draw a task $\tau_{\mathrm{test}}$ from $\mathcal{T}$. We then compute its optimal policy and roll out multiple few-shot support trajectories $\mathcal{D}_{\tau}$. Next, we serialize those trajectories into a textual context $\mathrm{ctx}_{\tau} = \mathsf{E}(\mathcal{D}_{\tau})$ via a deterministic encoder $\mathsf{E}$.
Finally, the LLM receives $(H_t, \mathrm{ctx}_{\tau})$ at the beginning of each period $t$ and generates an action $a_t \sim \pi^{\mathrm{ICL}}(\cdot \mid H_t, \mathrm{ctx}_{\tau})$, where we denote by $\pi^{\mathrm{ICL}} \in \Delta(\mathcal{A}_{\tau})$ the in-context learning policy.
In particular, we restrict outputs to action tokens and select the one with the highest probability.

\paragraph{Supervised fine-tuning.}
Fine-tuning adapts the pretrained parameters of the LLM to the target task distribution. We adopt parameter-efficient quantized low-rank adapters (QLoRA)~\cite{dettmers2024qlora}, which enable learning a low-rank update for each linear module while keeping the base weights frozen.
For each training task $\tau_{\mathrm{train}}$, we roll out multiple demonstration trajectories $\mathcal{C}_{\tau}$ under the optimal policy. We then serialize each trajectory into a string $s$. The training set comprises trajectories from multiple tasks drawn from $\mathcal{T}$.  After tokenization $s \mapsto (u_1,\ldots,u_L)$, SFT proceeds by minimizing the following loss function to obtain the optimal parameters $\theta^*$:
\begin{equation}\label{eq:sft_loss}
\mathcal{L}_{\mathrm{SFT}}(\theta)
\;=\;
\;\mathbb{E}_{\,s \in \{\mathcal{C}_{\tau}\}}
\Bigg[
-\sum_{i=1}^{L} \log p_\theta\!\big(u_i \,\big|\, u_{<i}\big)
\Bigg].
\end{equation}
This loss function matches the inference-time format. That is, the LLM learns to read few-shot demonstrations and continue the query trajectory in the same schema. We provide more details of fine-tuning in Appendix~\ref{sec:appendix_qlora}.
At evaluation, we use the same few-shot support structure as in the ICL setting, but now the policy is $\pi^{\mathrm{SFT}}(\cdot \mid H_t, \mathrm{ctx}_{\tau})$.
We provide more detailed pseudocode in Appendix~\ref{sec:appendix_pseudocode}.

%%%%%%%%%%%%%%%%%%%%%%%%%%%%%%%%%%%%%%%%%%%%%%%%%%%%%%
\section{Theoretical analysis}
%%%%%%%%%%%%%%%%%%%%%%%%%%%%%%%%%%%%%%%%%%%%%%%%%%%%%%
\label{sec:theoretical_analysis}

We develop a theoretical framework to understand why SFT improves in-context sequential decision-making. For tractability, our theoretical analysis considers a single linear self-attention (LSA) layer trained by gradient flow to predict optimal Q-values in linear MDPs. We build on the in-context linear-regression analysis of \cite{zhang2023trained}, which enables us to translate prediction-level guarantees into policy-level statements. We provide an end-to-end bound on the suboptimality of the policy induced by the LSA predictions, with an associated sample-complexity result (deferred to the appendix) relating the number of few-shot support trajectories and the training trajectory length to performance.

\subsection{Setup: In-context Q-value prediction in linear MDPs}
\label{subsec:theory_setup}

We consider a linear MDP task $\tau=\langle \mathcal S,\mathcal A,P,R,\rho,\gamma,\phi\rangle$ with known feature map $\phi:\mathcal S\times\mathcal A\to\mathbb R^d$. For a fixed task $\tau$, the optimal action-value function is assumed to be linear:
\begin{equation}
Q_\tau^*(s,a)=\langle \phi(s,a),w_\tau^*\rangle,
\qquad
w_\tau^*\in\mathbb R^d.
\label{eq:linear_q}
\end{equation}
For finite-horizon tasks, the time index can be absorbed into the feature map, e.g., by replacing $\phi(s,a)$ with $\phi_t(s,a)$ or $\phi(s,a,t)$. We suppress this notationally and write $\phi(s,a)$ throughout. We assume $w_\tau^* \stackrel{\mathrm{i.i.d.}}{\sim} \mathcal N(\mathbf{0}_d,\,I_d)$ after a standard rescaling of the features, where $\mathbf{0}_d\in\mathbb R^d$ denotes the zero vector and $I_d\in\mathbb R^{d\times d}$ the identity matrix.

\paragraph{Fixed-covariance reduction to in-context linear regression.}
Now let us fix a task $\tau$. We denote by $\mu$ the state-action distribution used to form the in-context examples, and define the feature covariance $\Lambda
\triangleq
\mathbb E_{(s,a)\sim\mu}\!\left[\phi(s,a)\phi(s,a)^\top\right]
\in\mathbb R^{d\times d}$.
Throughout the formal analysis we assume $\Lambda\succ 0$ is fixed across meta-training and test prompts, matching the fixed-covariance regime of \cite{zhang2023trained}. This holds, e.g., when the support trajectories come from a task-independent exploratory policy, or when the task distribution is restricted so that all tasks induce the same feature covariance. If $\Lambda$ varies, \cite{zhang2023trained} show that a single LSA layer can become biased under such covariate shifts. We therefore treat fixed $\Lambda$ as an explicit modeling assumption.

Let $x_i \triangleq \phi(s_i,a_i)$ and $y_i \triangleq Q_\tau^*(s_i,a_i)=\langle x_i, w_\tau^* \rangle$.
Following \cite{zhang2023trained}, we approximate the feature distribution by the matched Gaussian model
$x_i \stackrel{\mathrm{i.i.d.}}{\sim} \mathcal N(\mathbf{0}_d,\Lambda)$.
The $K_{\mathrm{test}}$ few-shot support trajectories each have length $T$. After the regression reduction, this gives $M \triangleq K_{\mathrm{test}}T$ in-context pairs $(x_i,y_i)_{i=1}^M$, together with a query feature $x_{\mathrm{q}}=\phi(s_{\mathrm{q}},a_{\mathrm{q}})$. Training trajectories contain $N \triangleq KT$ pairs from $K$ training trajectories.

\paragraph{LSA model and population training.}
Given $P=(x_1,y_1,\ldots,x_M,y_M,x_{\mathrm q})$, we define the embedding matrix $E=\bigl(\begin{smallmatrix} x_1 & \cdots & x_M & x_{\mathrm q}\\ y_1 & \cdots & y_M & 0 \end{smallmatrix}\bigr)\in\mathbb R^{(d+1)\times(M+1)}$.
The LSA layer with parameters $\theta=(W_{KQ},W_{PV})$ computes $f_{\mathrm{LSA}}(E;\theta) = E + W_{PV}E\,E^\top W_{KQ}E/M$.
The predicted Q-value $\widehat Q(x_{\mathrm q};\theta)$ is the bottom-right entry of $f_{\mathrm{LSA}}(E;\theta)$. The LSA layer is trained by gradient flow on the population mean-squared error
\begin{equation}
L(\theta)
=
\frac12
\mathbb E_{w,x_i,x_{\mathrm q}}
\left[
\left(
\widehat Q(x_{\mathrm q};\theta)
-
\langle w,x_{\mathrm q}\rangle
\right)^2
\right],
\label{eq:population_lsa_loss}
\end{equation}
where $w\sim\mathcal N(\mathbf{0}_d, I_d)$ and $x_i,x_{\mathrm q}\stackrel{\mathrm{i.i.d.}}{\sim}\mathcal N(\mathbf{0}_d,\Lambda)$. All formal results below are stated for this population objective. With finitely many training tasks, additional statistical and optimization errors can be added to the prediction error and we do not assign a specific rate to that finite-sample term here.

\subsection{In-context Q-prediction error}
\label{subsec:q_prediction_error}

The first ingredient is a per-query Q-prediction guarantee for the trained LSA layer. Define
\begin{equation}
\Gamma
\triangleq
\left(1+\frac1N\right)\Lambda
+
\frac{\operatorname{tr}(\Lambda)}{N}I_d .
\label{eq:gamma_def}
\end{equation}
The following lemma is a direct restatement of Theorems 4.1--4.2 and Corollary 4.3 of \cite{zhang2023trained} in our linear MDP notation.

\begin{lemma}[In-context Q-prediction error]
\label{lem:q_prediction}
Suppose the initialization satisfies Assumption 3.3 of \cite{zhang2023trained}, with initialization scale small enough for their convergence theorem. Gradient flow on \eqref{eq:population_lsa_loss} converges to a global minimizer $\theta^*$. At this minimizer, for any $P=(x_1,y_1,\ldots,x_M,y_M,x_{\mathrm q})$ drawn from an unseen task with $w^*\sim\mathcal N(\mathbf{0}_d,I_d)$, $x_i,x_{\mathrm q}\stackrel{\mathrm{i.i.d.}}{\sim}\mathcal N(\mathbf{0}_d,\Lambda)$, and $y_i=\langle w^*,x_i\rangle$, the LSA prediction is
\begin{equation}
\widehat Q(x_{\mathrm q}\mid P)
=
x_{\mathrm q}^\top
\Gamma^{-1}
\left(
\frac1M\sum_{i=1}^M y_i x_i
\right).
\label{eq:lsa_q_prediction}
\end{equation}
Moreover, in the noiseless realizable case,
\begin{equation}
\varepsilon_Q(M,N)
\triangleq
\mathbb E_{w^*,x_i,x_{\mathrm q}}\!\left[\bigl(\widehat Q(x_{\mathrm q}\mid P)-\langle w^*,x_{\mathrm q}\rangle\bigr)^2\right]
\le
\frac{(d+1)\operatorname{tr}(\Lambda)}{M}
+
\frac{(1+2d+d^2\kappa)\operatorname{tr}(\Lambda)}{N^2},
\label{eq:q_mse_expected}
\end{equation}
where $\kappa$ is the condition number of $\Lambda$.
\end{lemma}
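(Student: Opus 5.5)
The lemma restates Theorems 4.1--4.2 and Corollary 4.3 of \cite{zhang2023trained}, so the plan is to check that our objects match theirs and then import the results. Identify $x_i=\phi(s_i,a_i)$ as their covariates, $w^*_\tau$ as their task vector (Gaussian with identity covariance by assumption), and $y_i=\langle x_i,w^*\rangle$ as noiseless labels. Under the Gaussian reduction $x_i\sim\mathcal N(\mathbf 0_d,\Lambda)$ with fixed $\Lambda$, the embedding $E$, the LSA map $f_{\mathrm{LSA}}$, and the loss \eqref{eq:population_lsa_loss} are then literally theirs. Their convergence theorem, under Assumption 3.3 with small initialization, gives convergence of gradient flow to a global minimizer $\theta^*$. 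At that minimizer the only nonzero blocks are $W_{PV}$ with bottom-right entry $1$ and $W_{KQ}$ with top-left block $\Gamma^{-1}$, where $\Gamma$ is built from the training prompt length $N$ as in \eqref{eq:gamma_def}. Reading off the bottom-right entry of $W_{PV}EE^\top W_{KQ}E/M$ gives \eqref{eq:lsa_q_prediction}, since the query's label slot is $0$ and only the cross term $\frac1M\sum_i y_ix_i^\top\Gamma^{-1}x_{\mathrm q}$ survives. I would state explicitly that the normalization $1/M$ is applied to the test prompt while $\Gamma$ still carries $N$. This mismatch between train and test lengths is exactly what produces the second error term.

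For the error bound, write $h=\frac1M\sum_i y_ix_i=\widehat\Lambda w^*$ with $\widehat\Lambda=\frac1M\sum_i x_ix_i^\top$. The error is $x_{\mathrm q}^\top(\Gamma^{-1}\widehat\Lambda-I)w^*$. Taking expectations over $x_{\mathrm q}$ and $w^*$ gives
\[
\varepsilon_Q=\mathbb E\,\operatorname{tr}\bigl((\Gamma^{-1}\widehat\Lambda-I)^\top\Lambda(\Gamma^{-1}\widehat\Lambda-I)\bigr).
\]
Split $\Gamma^{-1}\widehat\Lambda-I=\Gamma^{-1}(\widehat\Lambda-\Lambda)+(\Gamma^{-1}\Lambda-I)$; the cross term vanishes because $\mathbb E\widehat\Lambda=\Lambda$. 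The variance term is evaluated with the Gaussian fourth moment $\mathbb E[xx^\top Axx^\top]=\Lambda A\Lambda+\Lambda A^\top\Lambda+\operatorname{tr}(A\Lambda)\Lambda$. Dividing by $M$ and using $\Gamma\succeq\Lambda$ bounds it by $(d+1)\operatorname{tr}(\Lambda)/M$. This gives the first summand, matching Corollary 4.3 of \cite{zhang2023trained}.

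The bias term is $\operatorname{tr}\bigl((\Gamma^{-1}\Lambda-I)^\top\Lambda(\Gamma^{-1}\Lambda-I)\bigr)$. Here $\Gamma^{-1}\Lambda-I=-\Gamma^{-1}\frac1N(\Lambda+\operatorname{tr}(\Lambda)I)$. Diagonalizing $\Lambda$ with eigenvalues $\lambda_j$, the $j$-th contribution is $\lambda_j\bigl(\frac{\lambda_j+\operatorname{tr}\Lambda}{N\lambda_j+\lambda_j+\operatorname{tr}\Lambda}\bigr)^2\le\lambda_j(\lambda_j+\operatorname{tr}\Lambda)^2/(N\lambda_j)^2$. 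Summing and using $\operatorname{tr}\Lambda/\lambda_j\le d\kappa$ together with $\sum_j\lambda_j=\operatorname{tr}\Lambda$ yields the bound $(1+2d+d^2\kappa)\operatorname{tr}(\Lambda)/N^2$ after expanding the square.

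The main obstacle is the bookkeeping in the variance term and in making constants agree with the cited corollary. In particular, the variance expression carries a $\Gamma^{-1}$ which must be bounded by $\Lambda^{-1}$ without losing a $\kappa$ factor; tracking that is the step I would check most carefully. Rather than re-deriving the global convergence, I would cite it and only verify that their assumptions transfer verbatim.
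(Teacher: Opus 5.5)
Your proposal is correct and takes the same route as the paper, which simply imports Theorems 4.1--4.2 and Corollary 4.3 of \cite{zhang2023trained}. Your bias--variance computation reproduces the stated constants exactly. The $\kappa$-concern in the variance step is moot: $\Gamma$ is a polynomial in $\Lambda$ with $\Gamma\succeq\Lambda$, so $\operatorname{tr}(\Gamma^{-2}\Lambda^3)\le\operatorname{tr}(\Lambda)$ and $\operatorname{tr}(\Gamma^{-2}\Lambda^2)\le d$ follow directly.

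Two small corrections. First, the limiting blocks are a balanced pair $c$ and $c^{-1}\Gamma^{-1}$ for some scalar $c>0$, not $1$ and $\Gamma^{-1}$; this is harmless, since only their product enters $\widehat Q$. Second, the $1/N^2$ term is the finite-training-length shrinkage bias $\operatorname{tr}\bigl((I_d-\Gamma^{-1}\Lambda)^2\Lambda\bigr)$. It is present even when $M=N$, so it is not produced by the train/test length mismatch.
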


Equation \eqref{eq:lsa_q_prediction} shows that the trained LSA layer implicitly computes a covariance-corrected in-context estimator. The factor $\Gamma^{-1}$ acts as a shrinkage version of $\Lambda^{-1}$ whose bias depends on the length $N=KT$. As $N\to\infty$, $\Gamma^{-1}\to\Lambda^{-1}$, and the estimator approaches the ordinary least-squares form. The two terms in \eqref{eq:q_mse_expected} have distinct origins: the $1/M$ term is the in-context estimation error on an unseen task, while the $1/N^2$ term is the training-length bias.

\subsection{End-to-end suboptimality of the induced policy}
\label{subsec:policy_bound}

Lemma~\ref{lem:q_prediction} bounds the error of one Q-value query under the matched Gaussian query model. Our experiments evaluate the policy obtained by acting with respect to the predicted Q-values along a full trajectory. To lift the prediction guarantee to a policy guarantee, we need an assumption that the Q-prediction error remains controlled on the query points encountered by the induced policy.

\paragraph{On-policy query calibration.}
Let $\widehat\pi$ be the policy induced by $\widehat Q$: $\widehat\pi_t(s)\in\arg\max_{a\in\mathcal A}\widehat Q_t(s,a)$.
We assume a constant $C_{\mathrm{on}}\ge 1$ such that, for every time $t\le T$,
\begin{equation}
\max_{\pi_t'\in\{\pi_t^*,\widehat\pi_t\}}
\mathbb E
\left[
\left(
\widehat Q_t(s_t,\pi_t'(s_t))
-
Q_t^*(s_t,\pi_t'(s_t))
\right)^2
\right]
\le
C_{\mathrm{on}}\varepsilon_Q(M,N),
\label{eq:on_policy_calibration}
\end{equation}
where the expectation is over the support trajectory, the unseen task, and the trajectory distribution $s_t\sim d_t^{\widehat\pi}$. This assumption is a direct on-policy transfer condition: it rules out the covariate-shift failure modes identified by \cite{zhang2023trained}. In the ideal matched-query setting, one may take $C_{\mathrm{on}}=1$; otherwise $C_{\mathrm{on}}$ measures the mismatch between the Gaussian query model in Lemma~\ref{lem:q_prediction} and the state-action queries encountered during rollout.

\begin{proposition}[End-to-end optimality gap]
\label{prop:policy_gap}
Let $\tau_{\mathrm{new}}$ be an unseen linear MDP drawn from $\mathcal T$, with horizon $T$, discount factor $\gamma\in(0,1]$, and optimal Q-functions linear in the features as in \eqref{eq:linear_q}. Let $\widehat Q$ be the LSA prediction obtained from a length-$M$ support trajectory, and let $\widehat\pi$ be the policy induced by $\widehat Q$. Under the conditions of Lemma~\ref{lem:q_prediction} and the on-policy calibration condition \eqref{eq:on_policy_calibration},
\begin{equation}
\mathbb E_{\tau_{\mathrm{new}}}
\left[
J_{\tau_{\mathrm{new}}}(\pi^*)
-
J_{\tau_{\mathrm{new}}}(\widehat\pi)
\right]
\le
C_{T,\gamma}\sqrt{C_{\mathrm{on}}\varepsilon_Q(M,N)},
\label{eq:policy_gap_general}
\end{equation}
where $C_{T,\gamma} \triangleq 2(1-\gamma^T)/(1-\gamma)$ and $C_{T,1} \triangleq 2T$.
Substituting \eqref{eq:q_mse_expected} gives
\begin{equation}
\mathbb E
\left[
J_{\tau_{\mathrm{new}}}(\pi^*)
-
J_{\tau_{\mathrm{new}}}(\widehat\pi)
\right]
\le
C_{T,\gamma}\sqrt{C_{\mathrm{on}}}
\left[
\frac{(d+1)\operatorname{tr}(\Lambda)}{M}
+
\frac{(1+2d+d^2\kappa)\operatorname{tr}(\Lambda)}{N^2}
\right]^{1/2}.
\label{eq:policy_gap_substituted}
\end{equation}
\end{proposition}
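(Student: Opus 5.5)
The plan is to use the standard performance-difference decomposition along the trajectory of $\widehat\pi$, bound each step's advantage gap by two on-policy Q-errors, and then apply Cauchy--Schwarz with the calibration condition \eqref{eq:on_policy_calibration}.

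\textbf{Step 1: performance difference.} By the finite-horizon performance difference lemma,
\[
J(\pi^*)-J(\widehat\pi)=\sum_{t=1}^{T}\gamma^{t-1}\,\mathbb E_{s_t\sim d_t^{\widehat\pi}}\bigl[Q_t^*(s_t,\pi_t^*(s_t))-Q_t^*(s_t,\widehat\pi_t(s_t))\bigr].
\]
Here $Q_t^*$ is the optimal action-value function at time $t$, with the time index absorbed into the features. The identity follows by telescoping $V_t^*(s_t)-V_t^{\widehat\pi}(s_t)$ along trajectories of $\widehat\pi$, using $V^*_t(s)=Q^*_t(s,\pi^*_t(s))$ and the Bellman equation for $Q^*_t(s,\widehat\pi_t(s))$.

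\textbf{Step 2: per-state gap.} Since $\widehat\pi_t(s)$ maximizes $\widehat Q_t(s,\cdot)$, we have $\widehat Q_t(s,\pi^*_t(s))\le\widehat Q_t(s,\widehat\pi_t(s))$. Adding and subtracting $\widehat Q_t$ at both actions gives
\[
Q_t^*(s,\pi_t^*(s))-Q_t^*(s,\widehat\pi_t(s))\le \bigl|Q^*_t-\widehat Q_t\bigr|(s,\pi^*_t(s))+\bigl|\widehat Q_t-Q^*_t\bigr|(s,\widehat\pi_t(s)).
\]

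\textbf{Step 3: Cauchy--Schwarz and calibration.} Take expectations over $s_t\sim d_t^{\widehat\pi}$ and also over the support data and the task $\tau_{\mathrm{new}}$. This is exactly the expectation appearing in \eqref{eq:on_policy_calibration}, which is why the calibration is stated under $d_t^{\widehat\pi}$ for both $\pi'\in\{\pi^*,\widehat\pi\}$. Jensen's inequality gives $\mathbb E|Z|\le\sqrt{\mathbb E Z^2}$, so each of the two terms is at most $\sqrt{C_{\mathrm{on}}\varepsilon_Q(M,N)}$ and the per-step gap is at most $2\sqrt{C_{\mathrm{on}}\varepsilon_Q(M,N)}$.

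\textbf{Step 4: summing over time.} Summing $\gamma^{t-1}$ over $t=1,\dots,T$ gives $(1-\gamma^T)/(1-\gamma)$, or $T$ when $\gamma=1$. This yields \eqref{eq:policy_gap_general} with $C_{T,\gamma}$ as stated. Substituting the bound \eqref{eq:q_mse_expected} of Lemma~\ref{lem:q_prediction} for $\varepsilon_Q(M,N)$ and using monotonicity of the square root gives \eqref{eq:policy_gap_substituted}.

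\textbf{Main obstacle.} The delicate point is getting the performance-difference identity in the right direction. Expectations must be taken under $d_t^{\widehat\pi}$ rather than $d_t^{\pi^*}$, so that they match the calibration assumption. One must also check that $\widehat\pi$ is allowed to depend on the support data inside the expectation. Conditioning on the support data and the task fixes $\widehat\pi$, so the identity holds pointwise, and then the outer expectation is taken. Everything else is routine.
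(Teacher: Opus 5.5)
Your proposal is correct and follows essentially the same argument as the paper. Both use the finite-horizon performance-difference identity under $d_t^{\widehat\pi}$, bound the per-state gap by two Q-errors via greediness of $\widehat\pi$, apply Jensen with the calibration condition for both $\pi^*$ and $\widehat\pi$, and sum the geometric series. You only change the order of the first two steps and state explicitly the conditioning on the support data, which the paper leaves implicit.
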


\paragraph{Implications.}
Substitute $M=K_{\mathrm{test}}T$ and $N=KT$. In the nearly undiscounted regime $\gamma\to 1$, \eqref{eq:policy_gap_substituted} implies
\begin{equation}
\left(
\mathbb E[J(\pi^*)-J(\widehat\pi)]
\right)^2
\lesssim
C_{\mathrm{on}}
\left[
\frac{4T(d+1)\operatorname{tr}(\Lambda)}{K_{\mathrm{test}}}
+
\frac{4(1+2d+d^2\kappa)\operatorname{tr}(\Lambda)}{K^2}
\right].
\label{eq:gap_squared_decomposition}
\end{equation}
Therefore, the number of support trajectories $K_{\mathrm{test}}$, the number of training trajectories $K$, and the horizon $T$ play different roles. Increasing $K_{\mathrm{test}}$ reduces the in-context estimation error, while increasing $K$ reduces the training-length bias. Increasing the horizon lengthens each trajectory, but it also increases the number of decision points over which Q-prediction errors can compound. These effects cancel in the second term but not in the first term in \eqref{eq:gap_squared_decomposition}. We provide the proof of Proposition~\ref{prop:policy_gap} and additional theoretical results in Appendix~\ref{sec:appendix_theory_details}.

%%%%%%%%%%%%%%%%%%%%%%%%%%%%%%%%%%%%%%%%%%%%%%%%%%%%%%
\section{Numerical experiments for performance evaluation}\label{Sec:Performance on Synthetic Data}
%%%%%%%%%%%%%%%%%%%%%%%%%%%%%%%%%%%%%%%%%%%%%%%%%%%%%%

We conduct numerical experiments to assess the performance of LLMs in sequential decision-making under MDP, POMDP, and APOMDP settings. We provide additional experiments in Appendix~\ref{sec:appendix_additional_results}, including comparisons with benchmarks such as DPT~\cite{DPT2023lee}, robustness tests, alternative few-shot demonstration analysis, and evaluation on the Darkroom task~\cite{DPT2023lee}.

\subsection{Task generation}\label{Secsec:Synthetic Data Generation}

\paragraph{MDP.} We consider an ``energy-management'' task that models the trade-off between ``charging'' and ``working'' using an MDP.
The system consists of $\mathrm{E} + 1$ discrete energy states $s \in \{0, 1, \ldots, \mathrm{E}\}$, where higher states correspond to greater stored energy.
At each decision step, the agent selects one action from $\mathcal{A}=\{0,1,2\}$. Actions $a=0$ and $a=2$ are two redundant charge actions: both replenish energy, transition from state $s$ to $\min(\mathrm{E}, s + 1)$ with probability $p$, and incur the same small cost $c$. The work action is $a=1$.
The work action consumes energy, transitioning from state $s$ to $\max(0, s-1)$ with probability $p$, and yields a reward proportional to the current energy level, $r = s / \mathrm{E}$.
The duplicated charge actions enlarge the action space, making action selection harder because two actions correspond to the same physical control.
We sample the probability $p \in [0.5, 1)$ uniformly to synthesize training and test tasks.\footnote{In all experiments, we set $\mathrm{E}=9$ and $c=-0.02$.}
While the theoretical analysis mentioned above studies a linear MDP, our experiments evaluate the proposed framework on tabular MDP settings.

\paragraph{POMDP.} The POMDP uses the same state dynamics and reward structure, but the agent receives only noisy observations. Each state emits one of $\mathrm{E}+1$ discrete observation symbols indexed by $\{0,1,\ldots,\mathrm{E}\}$, where the correct energy state is observed with probability $q$ and the remaining mass is uniformly distributed across other observations. We use $q \in \{0.5, 0.8, 1.0\}$ in our simulation study.

\paragraph{APOMDP.} We extend the setting by allowing both partial observability and model ambiguity. The agent faces the same charge-work trade-off but must reason over a set of plausible models. We use a simplified entropy-ball approximation based on Kullback--Leibler (KL) divergence: we first sample a transition-emission model, and additional latent models are Dirichlet perturbations around the first model, accepted only when $\mathrm{KL}(\mathrm{base}\,\|\,\mathrm{candidate}) \le 0.2$. We vary $|\mathcal{M}_\tau| \in \{1, 3, 5\}$.

For each training task $\tau$, we roll out 15 trajectories of length $T$ under its (approximately) optimal policy. We obtain these policies via backward induction for MDPs, belief-state backward induction for POMDPs, and robust backward induction using the $\alpha$-maximin expected utility ($\alpha$-MEU) criterion for APOMDPs~\cite{saghafian2018ambiguous}.

\subsection{Performance metric}\label{Secsec:Performance Metric}

We evaluate performance using the optimality gap:
\begin{equation}
\textrm{Optimality Gap} = \frac{1}{N_{\mathrm{eval}}} \sum_{\tau = 1}^{N_{\mathrm{eval}}} \frac{\mathrm{OPT}_{\tau}-\mathrm{OPT}^{\mathrm{Eval}}_{\tau}}{\mathrm{OPT}_{\tau}},
\end{equation}
where $N_{\mathrm{eval}}$ denotes the number of test tasks, $\mathrm{OPT}_{\tau}$ the reward under the optimal policy, and $\mathrm{OPT}^{\mathrm{Eval}}_{\tau}$ the reward under the evaluated policy. A smaller gap indicates closer-to-optimal performance.

\subsection{Estimation performance}\label{Secsec:Estimation Performance}

We compute the optimality gap on 100 test tasks by first computing rewards across evaluation trajectories within each task (30 for MDPs/POMDPs, 90 for APOMDPs), then averaging across tasks. Confidence intervals are two-sided Student's $t$ intervals around the mean over test tasks, scaled by its standard error.
Evaluation uses two few-shot support trajectories generated by the optimal policy. The ICL-only baseline uses the same pretrained base LLM without supervised fine-tuning.

\begin{figure}[thbp]
\centering
\begin{minipage}{0.3\textwidth}
\centering
\includegraphics[width=\linewidth]{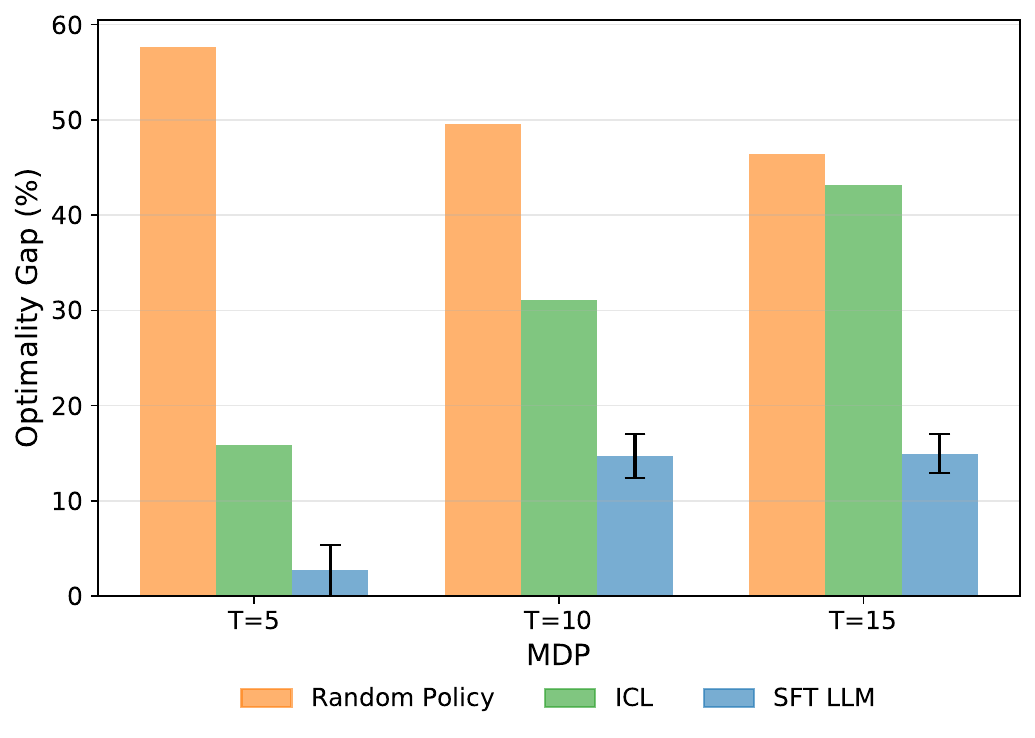}
\end{minipage}
\hfill
\begin{minipage}{0.32\textwidth}
\centering
\includegraphics[width=\linewidth]{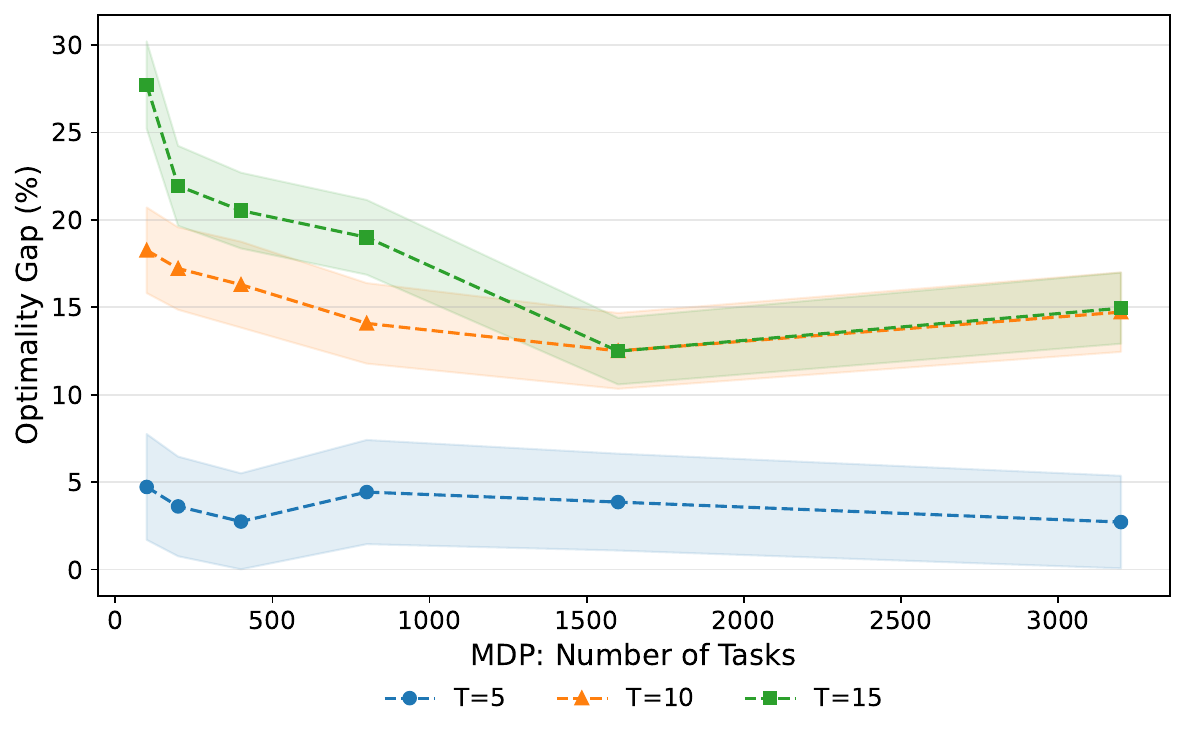}
\end{minipage}
\hfill
\begin{minipage}{0.32\textwidth}
\centering
\includegraphics[width=\linewidth]{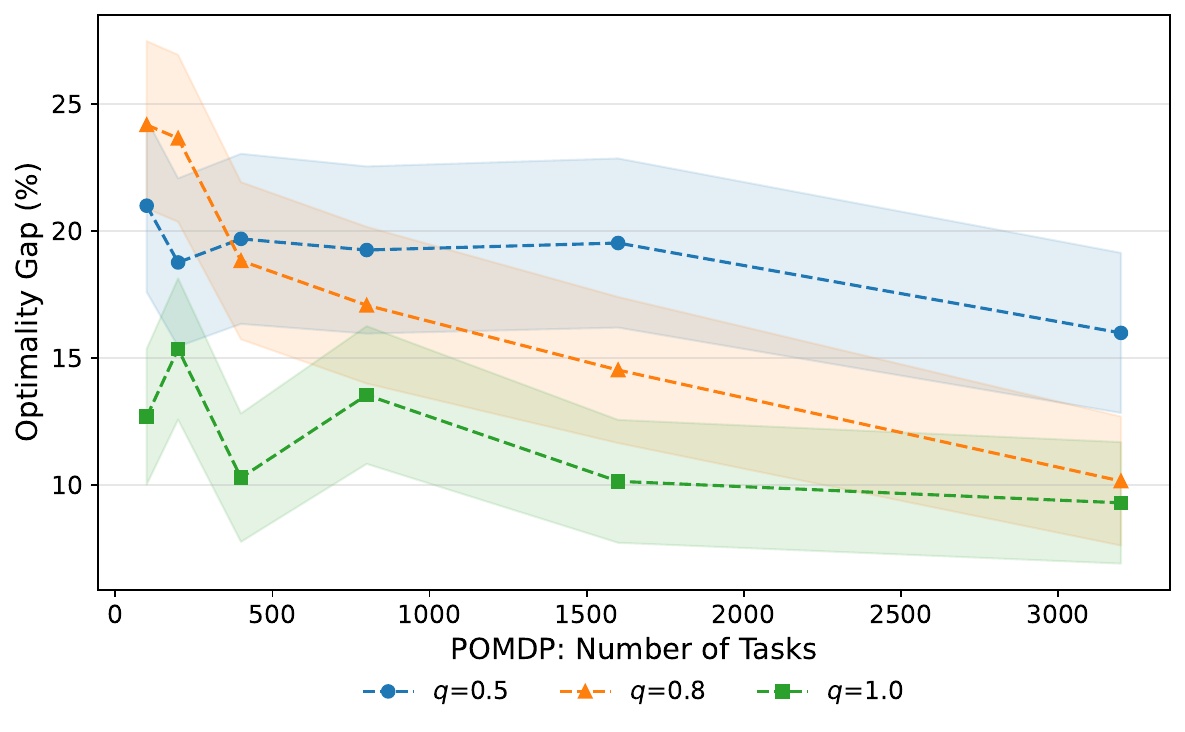}
\end{minipage}
\caption{Left: the optimality gaps of the random policy, the ICL policy, and the SFT policy (3{,}200 training tasks) across planning horizons $T \in \{5, 10, 15\}$ for MDPs. Center: the optimality gaps of the SFT policy versus the number of training tasks for MDPs across planning horizons $T \in \{5, 10, 15\}$. Right: the optimality gaps of the SFT policy versus the number of training tasks for POMDPs with planning horizon $T=10$ and the values of $q$ equal to 0.5, 0.8, and 1.0. The shaded areas represent the 95\% confidence interval.}
\label{figure:mdp-pomdp-gap}
\end{figure}

\paragraph{MDP.} Figure~\ref{figure:mdp-pomdp-gap} (left) shows the optimality gaps of the random policy, the ICL policy, and the SFT policy (3{,}200 training tasks) across planning horizons $T \in \{5, 10, 15\}$. The random policy achieves optimality gaps of 57.6\%, 49.6\%, and 46.4\% for $T = 5$, $10$, and $15$, respectively. The ICL policy reduces these gaps to 15.9\%, 31.1\%, and 43.2\%, which provides meaningful gains over random guessing. The SFT policy further reduces the gap to 3\%, 15\%, and 15\% for $T = 5$, $10$, and $15$, respectively. In particular, we see the largest improvements at longer horizons where in-context learning alone struggles.

Figure~\ref{figure:mdp-pomdp-gap} (center) shows the optimality gap as a function of the number of training tasks (100, 200, 400, 800, 1{,}600, and 3{,}200). For $T = 10$ and $15$, the gap decreases as the number of training tasks increases. For example, at $T = 15$, the gap decreases from about 30\% to 15\%. For $T=5$, the SFT policy already performs well with few training tasks. The larger improvements for longer horizons indicate that fine-tuning the LLM is valuable for complex, longer-horizon problems.

\paragraph{POMDP.}
Figure~\ref{figure:mdp-pomdp-gap} (right) shows the optimality gaps under different observation probabilities with planning horizon $T=10$. When $q=1$ (full observability), the gap is smallest. As $q$ decreases, belief states become less informative and the gaps increase. Additional fine-tuning data are especially valuable in partially observed environments, with the fully observed case showing modest gains from additional training tasks.

\begin{figure}[thbp]
\centering
\begin{minipage}{0.32\textwidth}
\centering
\includegraphics[width=\linewidth]{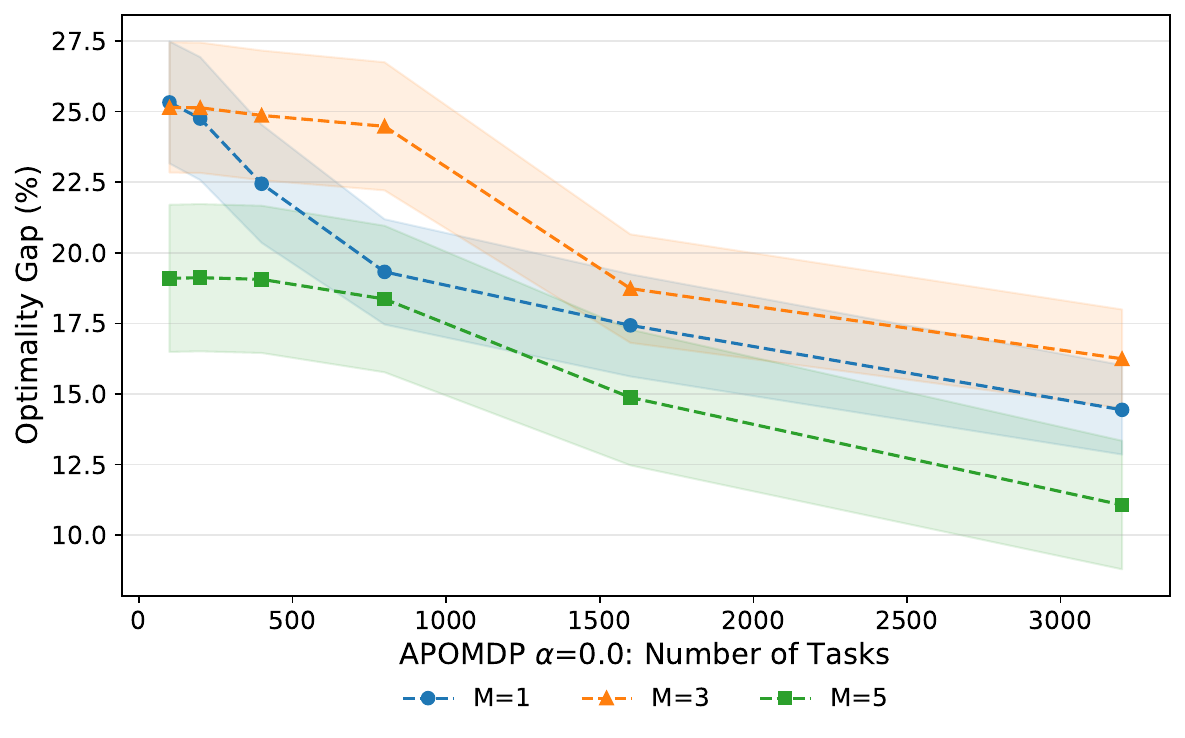}
\end{minipage}
\hfill
\begin{minipage}{0.32\textwidth}
\centering
\includegraphics[width=\linewidth]{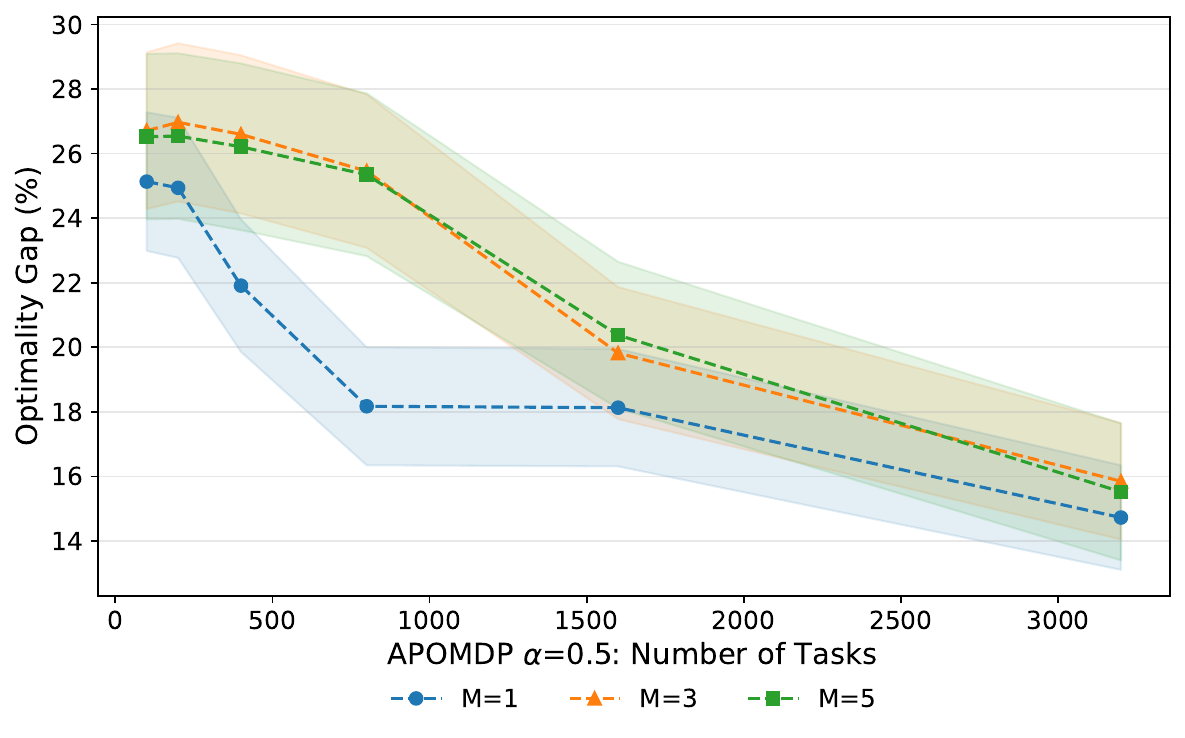}
\end{minipage}
\hfill
\begin{minipage}{0.32\textwidth}
\centering
\includegraphics[width=\linewidth]{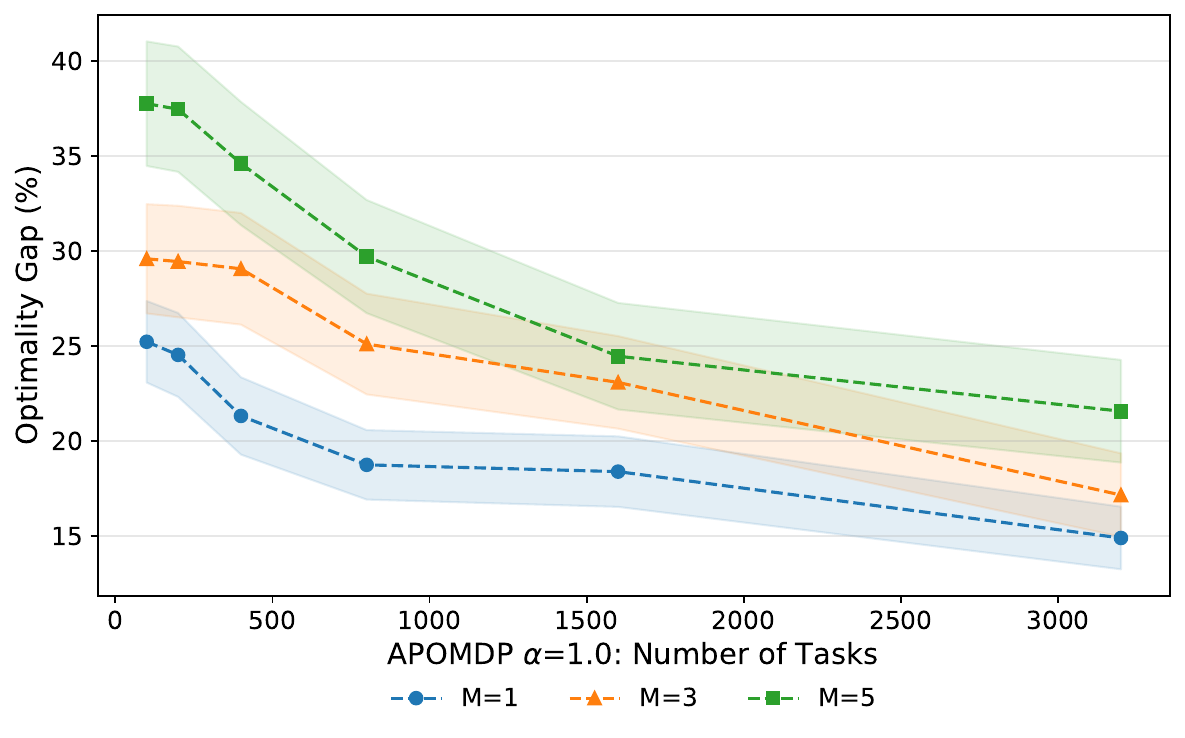}
\end{minipage}
\caption{Optimality gaps versus the number of training tasks for APOMDPs. Planning horizon $T=10$ and $q=0.8$. The three lines correspond to ambiguity set sizes $|\mathcal{M}_\tau| \in \{1, 3, 5\}$. The three panels correspond to $\alpha = 0.0$ (left), $\alpha = 0.5$ (center), and $\alpha = 1.0$ (right). The shaded areas represent the 95\% confidence interval.}
\label{figure:apomdp-gap}
\end{figure}

\paragraph{APOMDP.} Figure~\ref{figure:apomdp-gap} shows the optimality gaps for APOMDPs. For a given number of models, the gap decreases as the number of training tasks increases. The effect of ambiguity set size $|\mathcal{M}_\tau|$, however, depends on the ambiguity attitude $\alpha$. Under the optimistic criterion ($\alpha = 0$), a larger $|\mathcal{M}_\tau|$ expands the maximization space, providing the agent with a richer set of candidate models to exploit. This broader exploration scope facilitates better generalization, and the optimality gap decreases when $|\mathcal{M}_\tau|$ increases. In contrast, under the pessimistic criterion ($\alpha = 1$), a larger $|\mathcal{M}_\tau|$ tightens the worst-case constraint by minimizing over more models and increases the optimality gap. The neutral criterion ($\alpha = 0.5$) balances these effects, and the gap is relatively stable when $|\mathcal{M}_\tau|$ increases. These results suggest that fine-tuning LLMs allows them to handle model ambiguity, and that the benefits depend on the ambiguity attitude and the size of the ambiguity set.

\paragraph{Comparison against DPT.}
We compare our fine-tuned LLM against DPT~\cite{DPT2023lee} with 3{,}200 training tasks. Across all three settings, our approach achieves smaller optimality gaps. In the MDP setting, the gap decreases from 43\% (DPT) to 15\% (SFT) at $T=15$. In the POMDP setting, the gap decreases from 25\% (DPT) to 10\% (SFT) at $q=0.8$. In the APOMDP setting, our approach maintains a roughly 10-percentage-point advantage across all ambiguity attitudes. In particular, the gaps for DPT grow faster than those of our approach under the pessimistic criterion ($\alpha=1.0$) when the ambiguity set size increases. We provide more details in Appendix~\ref{sec:appendix_dpt}.

\paragraph{Robustness to out-of-distribution (OOD) test conditions.}
We evaluate the robustness of our approach by testing on parameters outside the training distribution. For MDPs, the model trained with $p \in [0.5, 1)$ and tested on $p \in [0.3, 0.5)$ achieves low out-of-distribution gaps at all three planning horizons ($T=5$, $10$, and $15$). For POMDPs, the model trained with $q=0.8$ achieves nearly identical gaps when tested at $q \in \{0.6, 0.7, 0.9\}$. For APOMDPs, the model trained at $\alpha=0.5$ transfers well to $\alpha \in \{0.0, 1.0\}$, with gaps increasing slightly. We provide more details in Appendix~\ref{sec:appendix_robustness}.

\paragraph{Quality of few-shot support trajectories.}
Our results suggest that replacing rollouts under the optimal policy with rollouts under the random policy in the few-shot support trajectories affects the performance. The difference is most pronounced at $T=5$: with optimal support trajectories, the SFT LLM achieves an optimality gap of about 3\%, while with random support trajectories the gap remains around 20\% even at 3{,}200 training tasks. We provide more details in Appendix~\ref{sec:appendix_fewshot}.

\paragraph{Performance on Darkroom tasks.}
To further evaluate our approach, we use the Darkroom task~\cite{DPT2023lee}, a $10\times10$ deterministic grid-world with a hidden goal cell. The agent has five actions (the four cardinal moves and a stay action), receives a reward of $+1$ only when it executes the stay on the goal cell, and 0 otherwise. The task is challenging because rewards are sparse and the goal location is unobservable, so the LLM must infer it from the few-shot demonstrations alone. We split the 100 possible goals into 80 fine-tuning goals and 20 held-out test goals. Our approach achieves approximately 95\% of oracle performance on 20 test goals, far exceeding the random policy. We provide more details in Appendix~\ref{sec:appendix_darkroom}.

%%%%%%%%%%%%%%%%%%%%%%%%%%%%%%%%%%%%%%%%%%%%%%%%%%%%%%
\section{Discussion}\label{sec:discussion}
%%%%%%%%%%%%%%%%%%%%%%%%%%%%%%%%%%%%%%%%%%%%%%%%%%%%%%

In this paper, we study the ability of LLMs to support sequential decision-making. We propose a supervised fine-tuning framework that adapts a pretrained LLM using parameter-efficient QLoRA and offline, oracle-labeled sequential decision-making data. This process yields a history-conditioned policy that can generalize to new tasks via few-shot support trajectories. Empirically, we move beyond typical fully observed MDPs and evaluate LLMs on two more challenging decision-making environments that allow partial observability and model ambiguity: POMDPs and APOMDPs. Our experiments show that fine-tuned LLMs consistently perform well across all three settings, with the largest gains in long-horizon, partially observed, and model-ambiguous settings. The fine-tuned LLM also compares favorably against DPT across all benchmarks and demonstrates robust generalization to out-of-distribution test conditions. Our theoretical results interpret the fine-tuned attention mechanism as implicitly estimating optimal Q-function parameters, and use this interpretation to create an end-to-end suboptimality bound for the induced policy that separates the in-context estimation error from the training-length bias.

We discuss the limitations of our work in Appendix~\ref{sec:appendix_limitations}. Several promising directions remain for future work. A primary motivation of our work is applying this framework to real-world clinical decision-making using electronic health record data, where offline observational data are abundant. Using RL or other approaches to improve LLM capabilities in understanding causation under ambiguity and proposing effective dynamic treatment recommendations (see, e.g.,~\cite{saghafian2023ambiguous}) can be an important path toward achieving this goal. Furthermore, reducing the reliance on oracle supervision would significantly broaden applicability to domains where optimal policies are hard to obtain. Extending the theoretical and empirical analysis to multimodal settings is another natural next step, as it can enable further progress toward LLM preparation for real-world decision-making applications such as those in healthcare. Finally, exploring whether the natural language capabilities of pretrained LLMs could incorporate human intuition alongside numerical trajectories (e.g., to create powerful centaurs~\cite{saghafian2025insight, saghafian2024effective,dean2022algorithm}) may open new avenues for knowledge transfer in complex, high-dimensional environments.

%%%%%%%%%%%%%%%%%%%%%%%%%%%%%%%%%%%%%%%%%%%%%%%%%%%%%%
% Acknowledgments (auto-hidden in anonymized submission)
%%%%%%%%%%%%%%%%%%%%%%%%%%%%%%%%%%%%%%%%%%%%%%%%%%%%%%

\begin{ack}
This work was partially supported by a grant from Amazon AWS and Harvard Data Science Initiative (PI: Saghafian).
\end{ack}

\bibliographystyle{unsrt}
{\small \bibliography{bib}}

\clearpage
\newpage

\appendix

\section{Additional related work}\label{sec:appendix_additional_related_work}

\paragraph{Causal inference and LLMs.} Causal foundation models have been developed to estimate treatment effects (see, e.g.,~\cite{zhang2023towards}), but these studies typically focus on single-time-step settings. Our study addresses the gap in using LLMs for sequential causal decision-making, particularly in ADTR settings with unobservable confounders. A survey of related studies in this stream can be found in~\cite{liu2024large}.

\paragraph{Offline reinforcement learning.} Offline RL approaches learn policies from previously collected data without online interaction~\cite{levine2020offline}. Some approaches include Conservative Q-Learning~\cite{kumar2020conservative}, which regularizes Q-values to avoid overestimation of out-of-distribution actions, pessimistic offline RL with provable guarantees~\cite{jin2021pessimism}, and minimalist approaches that add behavioral cloning regularization~\cite{fujimoto2021minimalist}. Our study shares the offline data setting with this literature but proposes a different approach. Rather than learning a policy through explicit value-function optimization, we fine-tune an LLM to generate optimal actions in-context, which enables policy transfer to new tasks via few-shot support trajectories.

\paragraph{Meta-reinforcement learning.} Meta-reinforcement learning trains agents that can adapt to new tasks from a shared task distribution. \cite{duan2016rl} and~\cite{wang2016learning} train recurrent networks on sequences of episodes so that the hidden state implicitly encodes a learning algorithm. \cite{finn2017model} instead learns initializations amenable to fast gradient-based adaptation. Our study leverages pretrained language model representations and uses supervised fine-tuning rather than RL-based meta-training.

\section{Definitions: MDP and POMDP}\label{sec:appendix_standard_definitions}

We provide the formal definitions for the MDP and POMDP discussed in Section~\ref{sec:special_cases}.

\paragraph{MDP [full observability].}
We set $X_t = S_t \in \mathcal{S}_\tau$ and model the task as
\[
\mathcal{E}^{\mathrm{MDP}}_\tau
=\big\langle \mathcal{S}_\tau,\; \mathcal{A}_\tau,\; P_\tau,\; R_\tau,\; \rho_\tau,\; T_\tau,\; \gamma \big\rangle,
\]
where $P_\tau(\cdot \mid s,a)$ is the state-transition kernel, $R_\tau(s,a)$ the one-step reward, and $\rho_\tau$ the initial-state distribution. The Markov property implies
$\mathbb{P}(S_{t+1} \mid H_t,a_t)=P_\tau(S_{t+1} \mid S_t,a_t)$ and $r_t=R_\tau(S_t,a_t)$.
Without loss of optimality, one may restrict to Markov policies $\pi_t(a\mid H_t)=\pi_t(a\mid S_t)$, and to stationary $\pi(a\mid S)$ when the model is time-homogeneous.

\paragraph{POMDP [partial observability].}
When we only observe $O_t\in\mathcal{O}_\tau$ generated from a latent state $S_t\in\mathcal{S}_\tau$, we set $X_t=O_t$ and define
\[
\mathcal{E}^{\mathrm{POMDP}}_\tau
=\big\langle \mathcal{S}_\tau,\; \mathcal{A}_\tau,\; \mathcal{O}_\tau,\; P_\tau,\; Q_\tau,\; R_\tau,\; \rho_\tau,\; T_\tau,\; \gamma \big\rangle,
\]
where $P_\tau$ is the state-transition kernel and $Q_\tau(o\mid s,a)$ is the observation kernel.
It is often convenient to work with the belief vector $b_t \in \Delta(\mathcal{S}_\tau)$ over latent states, which summarizes all information in the history $H_t$. The belief is updated via a Bayes operator $B_{\tau}$, i.e., $b_{t+1} = B_{\tau}(b_t, a_t, o_{t+1})$, which combines the prior belief $b_t$, the state transition dynamics, and the observation likelihood to produce a new posterior using Bayes' rule. The belief vector $b_t$ serves as a sufficient statistic that restores a Markov property in the belief space. We restrict attention to belief-based policies $\pi_t(a \mid H_t) = \pi_t(a \mid b_t)$. MDPs arise as special cases with full observability by setting $\mathcal{O}_\tau = \mathcal{S}_\tau$ and $Q_\tau(o \mid s,a) = \mathbf{1}\{o = s\}$.

\section{Supervised fine-tuning details}\label{sec:appendix_qlora}

We fine-tune an open-source LLM using a parameter-efficient QLoRA setup~\cite{dettmers2024qlora}. Specifically, for each linear module $W\!\in\!\mathbb{R}^{d_{\mathrm{out}}\times d_{\mathrm{in}}}$ in attention and multi-layer perceptron blocks, we learn a low-rank update $\Delta W\!=\!B A$ with $A\!\in\!\mathbb{R}^{r\times d_{\mathrm{in}}}$ and $B\!\in\!\mathbb{R}^{d_{\mathrm{out}}\times r}$ and keep the original parameters $W_0$ of the LLM frozen. Here $d_{\mathrm{in}}$ and $d_{\mathrm{out}}$ are the input and output feature dimensions of the linear module and $r$ is the rank of the low-rank update. The updated parameters are given by $W = W_0 + \Delta W = W_0 + BA$, where $r \ll \min\{d_{\mathrm{in}}, d_{\mathrm{out}}\}$. Only the adapter parameters $(A,B)$ are updated to minimize the loss $\mathcal{L}_{\mathrm{SFT}}(\theta)$ defined in Eq.~\eqref{eq:sft_loss}. Because the base model weights remain fixed, we substantially reduce the number of trainable parameters and computational resources required.

Throughout our experiments, we use Meta's Llama-2-7B as the base model, which strikes a favorable balance between capacity and trainability. We use the Paged AdamW 32-bit optimizer with a learning rate of $2 \times 10^{-4}$, a cosine learning rate schedule, and a warmup ratio of 0.05. For the low-rank adaptation (LoRA) component, the adapter rank is set to $r = 64$ with LoRA $\alpha = 64$ and dropout of 0.1, applied to all attention and multilayer perceptron projection layers. We train for three epochs, evaluating on the validation set (10\% training tasks) at each epoch. All training is conducted on a single NVIDIA A100 GPU with 80\,GB of memory using QLoRA with 4-bit NormalFloat (NF4) quantization and bfloat16 mixed-precision computation to reduce memory usage and accelerate convergence. The wall-clock cost of a single fine-tuning run together with its evaluation ranges from a few hours to roughly three to four days, depending on the number of training tasks. APOMDP runs are more expensive than MDP and POMDP runs because of computing the optimal policy by backward induction. Including preliminary and failed runs that did not appear in the final paper, the full project consumed approximately two to three months to run.

\paragraph{Licenses of existing assets.} The base model Llama-2-7B is used under the Llama 2 Community License Agreement~\cite{touvron2023llama}, which permits research use with attribution. We use the QLoRA implementation~\cite{dettmers2024qlora} and the \texttt{bitsandbytes} library, both released under the MIT license, together with HuggingFace \texttt{transformers} and \texttt{peft} (Apache~2.0) and PyTorch (BSD-3-Clause). The Darkroom task in Appendix~\ref{sec:appendix_darkroom} is reimplemented from the description in \cite{DPT2023lee}.

\section{Detailed pseudocode}\label{sec:appendix_pseudocode}

We show detailed pseudocode for the key components of our framework: construction of the fine-tuning dataset, the supervised fine-tuning process with QLoRA, and the evaluation procedure.

\begin{algorithm}[thbp]
\caption{Training and evaluation pipeline}
\label{alg:pipeline_simplified}
\begin{algorithmic}[1]
\Require Training tasks.
\Ensure Fine-tuned model parameters $\theta$ and performance evaluation.
\State \textbf{(Data)} For each $\tau_{\mathrm{train}} \sim \mathcal{T}$, roll out multiple demonstration trajectories $\mathcal{C}_{\tau}$ under the optimal policy, serialize them into strings $s$, and collect $\{\mathcal{C}_{\tau}\}$.
\State \textbf{(ICL)} For each test task $\tau_{\mathrm{test}}\sim \mathcal{T}$, construct textual context $\mathrm{ctx}_{\tau}$ from few-shot support trajectories and generate actions.
\State \textbf{(SFT)} Fine-tune the LLM with QLoRA by minimizing $\mathcal{L}_{\mathrm{SFT}}(\theta)$ using $s \in \{\mathcal{C}_{\tau}\}$ (Eq.~\eqref{eq:sft_loss}).
\State \textbf{(Eval)} For test tasks $\{\tau_{\mathrm{test}}\}$, compute reward and performance gap.
\end{algorithmic}
\end{algorithm}

\begin{algorithm}[thbp]
\caption{Construct training data for supervised fine-tuning}
\label{alg:construct_data}
\begin{algorithmic}[1]
\Require Task distribution $\mathcal{T}$; number of training tasks $N_{\mathrm{ft}}$; number of trajectories $K$ per task; horizon $T$.
\Ensure Fine-tuning dataset $\{\mathcal{C}_\tau\} = \bigcup_{\tau} \mathcal{C}_\tau$ of serialized trajectories.
\For{$i = 1,\dots,N_{\mathrm{ft}}$}
  \State Draw a task $\tau$ from $\mathcal{T}$ (MDP/POMDP/APOMDP)
  \State Compute (approximately) optimal policy for $\tau$
  \For{$k = 1,\dots,K$}
    \State Roll out a trajectory of length $T$: $(X_t, a_t, r_t)_{t=1}^{T}$
    \State Serialize the trajectory into a string via a deterministic encoder $\mathsf{E}$
    \State Add the serialized trajectory to $\mathcal{C}_\tau$
  \EndFor
\EndFor
\State \Return $\{\mathcal{C}_\tau\}$
\end{algorithmic}
\noindent\footnotesize\textbf{Note.} A serialized trajectory is a textual encoding of the sequence $(X_t, a_t, r_t)_{t=1}^{T}$ under a fixed schema defined by the encoder $\mathsf{E}$. For example, $\mathsf{E}$ may produce a string of the form \texttt{<O\_1> 3, <A\_1> 1, <R\_1> 0.33, <O\_2> 2, \ldots}, where \texttt{<O\_t>}, \texttt{<A\_t>}, and \texttt{<R\_t>} are characters indicating the observation, action, and reward at period $t$, respectively.
\end{algorithm}

\begin{algorithm}[thbp]
\caption{Supervised fine-tuning with QLoRA}
\label{alg:sft}
\begin{algorithmic}[1]
\Require Base model parameters $W_0$; adapter rank $r$; fine-tuning dataset $\{\mathcal{C}_\tau\}$.
\Ensure Fine-tuned model.

\State For $s \in \{\mathcal{C}_\tau\}$, tokenize $s \mapsto (u_1,\dots,u_L)$

\State For linear module $W \in \mathbb{R}^{d_{\mathrm{out}}\times d_{\mathrm{in}}}$, parameterize $W = W_0 + BA$ with $A \in \mathbb{R}^{r\times d_{\mathrm{in}}}$ and $B \in \mathbb{R}^{d_{\mathrm{out}}\times r}$, where $W_0$ is the frozen base model parameter and $BA$ is the trainable low-rank update

\State Take a gradient step on $\theta=(A, B)$ to minimize the loss function:
\Statex \hspace{1em} $\mathcal{L}_{\mathrm{SFT}}(\theta) = \mathbb{E}_{\,s \in \{\mathcal{C}_\tau\}}\left[-\sum_{i=1}^{L} \log p_{\theta}(u_i \mid u_{<i})\right]$ (Eq.~\eqref{eq:sft_loss})

\State \Return Fine-tuned model
\end{algorithmic}
\noindent\footnotesize\textbf{Note.} Here $d_{\mathrm{in}}$ and $d_{\mathrm{out}}$ are the input and output feature dimensions of the linear module.
\end{algorithm}

\begin{algorithm}[thbp]
\caption{Evaluation via rollouts}
\label{alg:evaluation}
\begin{algorithmic}[1]
\Require LLM model; task distribution $\mathcal{T}$; horizon $T$; number of trajectories $K_{\mathrm{eval}}$ per test task.
\Ensure Reward and performance gap.
\State Draw a task $\tau_{\mathrm{test}}$ from $\mathcal{T}$ and compute (approximately) optimal policy for $\tau_{\mathrm{test}}$
  \State Roll out few-shot support trajectories $\mathcal{D}_\tau$ 
  \State Serialize trajectories into a textual context $\mathrm{ctx}_\tau = \mathsf{E}(\mathcal{D}_\tau)$
  \For{$i=1,\dots,K_{\mathrm{eval}}$ and $t = 1,\dots,T$}
    \State Form the input $(H_t, \mathrm{ctx}_\tau)$ and decode the next action token(s) to obtain $a_t$
    \State Execute $a_t$ in $\tau_{\mathrm{test}}$; observe reward $r_t$ and next observation
    \State Update history $H_{t+1} \gets H_t \cup \{(a_t,r_t,X_{t+1})\}$
  \EndFor
\State \Return Reward and performance gap
\end{algorithmic}
\noindent\footnotesize\textbf{Note.} We restrict the outputs of the LLM to action tokens and select the one with the highest probability. 
\end{algorithm}

\section{Additional theoretical results}\label{sec:appendix_theory_details}

\subsection{Proof of Proposition~\ref{prop:policy_gap}}
\label{sec:appendix_proof_e2e_subopt}

\begin{proof}
For each time $t$ and state $s$, greediness of $\widehat\pi_t$ with respect to $\widehat Q_t$ implies
\begin{align*}
V_t^*(s)-Q_t^*(s,\widehat\pi_t(s))
&=
Q_t^*(s,\pi_t^*(s))-Q_t^*(s,\widehat\pi_t(s))
\\
&\le
\left|
\widehat Q_t(s,\pi_t^*(s))
-
Q_t^*(s,\pi_t^*(s))
\right|
+
\left|
\widehat Q_t(s,\widehat\pi_t(s))
-
Q_t^*(s,\widehat\pi_t(s))
\right|.
\end{align*}
The finite-horizon performance-difference identity gives
\[
J_{\tau_{\mathrm{new}}}(\pi^*)-J_{\tau_{\mathrm{new}}}(\widehat\pi)
=
\sum_{t=1}^T
\gamma^{t-1}
\mathbb E_{s_t\sim d_t^{\widehat\pi}}
\left[
V_t^*(s_t)-Q_t^*(s_t,\widehat\pi_t(s_t))
\right].
\]
Combining the previous two displays, applying Jensen's inequality, and using \eqref{eq:on_policy_calibration} for both $\pi_t^*$ and $\widehat\pi_t$ yields
\[
\mathbb E
\left[
V_t^*(s_t)-Q_t^*(s_t,\widehat\pi_t(s_t))
\right]
\le
2\sqrt{C_{\mathrm{on}}\varepsilon_Q(M,N)} .
\]
Summing the geometric series over $t=1,\ldots,T$ gives \eqref{eq:policy_gap_general}. Equation \eqref{eq:policy_gap_substituted} follows by substituting \eqref{eq:q_mse_expected}.
\end{proof}

\subsection{Numerical validation of Proposition~\ref{prop:policy_gap}}\label{sec:appendix_simulation_validation}

To complement the analytical bounds, we run a simulation that tests the simulation-lemma step of Proposition~\ref{prop:policy_gap}. The objective is to verify that, even when $\varepsilon_Q$ is measured empirically rather than upper-bounded analytically, the cumulative gap satisfies
\begin{equation}\label{eq:appendix_simlemma_target}
J_{\tau_{\mathrm{new}}}(\pi^*) - J_{\tau_{\mathrm{new}}}(\hat{\pi})
\;\leq\; C_{T,\gamma}\sqrt{C_{\mathrm{on}}}\,\sqrt{\widehat{\varepsilon}_Q},
\end{equation}
where $\widehat{\varepsilon}_Q$ is the empirical mean of $(\hat{Q}-Q^*)^2$ along
the same trajectory used to evaluate the gap.

\paragraph{Setup.} We consider a linear MDP with feature
dimension $d=10$, $|\mathcal{A}|=5$ actions per step, horizon $T=10$,
and discount $\gamma=0.95$. For each task $\tau$ we draw $w^*_\tau\sim
\mathcal{N}(\mathbf{0}_d,I_d)$ independently. Per-step candidate features are i.i.d.\
$\phi_{t,a}\sim\mathcal{N}(\mathbf{0}_d,\Lambda)$, and the in-context demonstrations are
noiseless pairs $(x_i,\langle w^*_\tau,x_i\rangle)$ with $x_i\sim\mathcal{N}(\mathbf{0}_d,\Lambda)$.
The closed-form predictor $\Gamma^{-1}\bigl(\frac{1}{M}\sum_i y_i x_i\bigr)$
from Eq.~\eqref{eq:lsa_q_prediction} is applied with
$\Gamma=(1+1/N)\Lambda+(\operatorname{tr}\Lambda/N)I_d$. The empirical gap is
$\widehat{\mathrm{gap}}=\sum_{t=1}^{T}\gamma^t[Q^*(s_t,\pi^*(s_t))-Q^*(s_t,\hat{\pi}(s_t))]$,
and $\widehat{\varepsilon}_Q$ is the mean of $(\hat{Q}-Q^*)^2$ over the
$T\cdot|\mathcal{A}|=50$ query points. All quantities are
averaged over $500$ independent tasks per configuration.
We jointly vary three key parameters that could affect the gap and $\varepsilon_Q$:
\begin{itemize}[noitemsep, leftmargin=*]
\item Support trajectory length $M\in\{10,20,50,100,200,500,1000\}$, which controls
$\widehat{\varepsilon}_Q$ along each curve.
\item Training length $N\in\{100,1000,10000\}$, which enters
through $\Gamma^{-1}$ and controls the training-length bias.
\item Feature covariance via a diagonal $\Lambda$ with log-spaced eigenvalues
between $1/\kappa$ and $1$, for $\kappa\in\{1,5,25\}$ (isotropic, mild, and
strong anisotropy).
\end{itemize}

\paragraph{Results.}
Figure~\ref{fig:appendix_simlemma} plots the empirical gap against
$\widehat{\varepsilon}_Q$ over all $7\times 3\times 3=63$ configurations,
with one panel per $\Lambda$ and one curve per $N$. 
We see that every empirical curve lies strictly below the dashed reference
line~\eqref{eq:appendix_simlemma_target}: the simulation-lemma step holds in
every configuration, including the strongly anisotropic regime $\kappa=25$ in
which $\operatorname{tr}(\Lambda^{-1})$ is largest.

\begin{figure}[thbp]
\centering
\includegraphics[width=\linewidth]{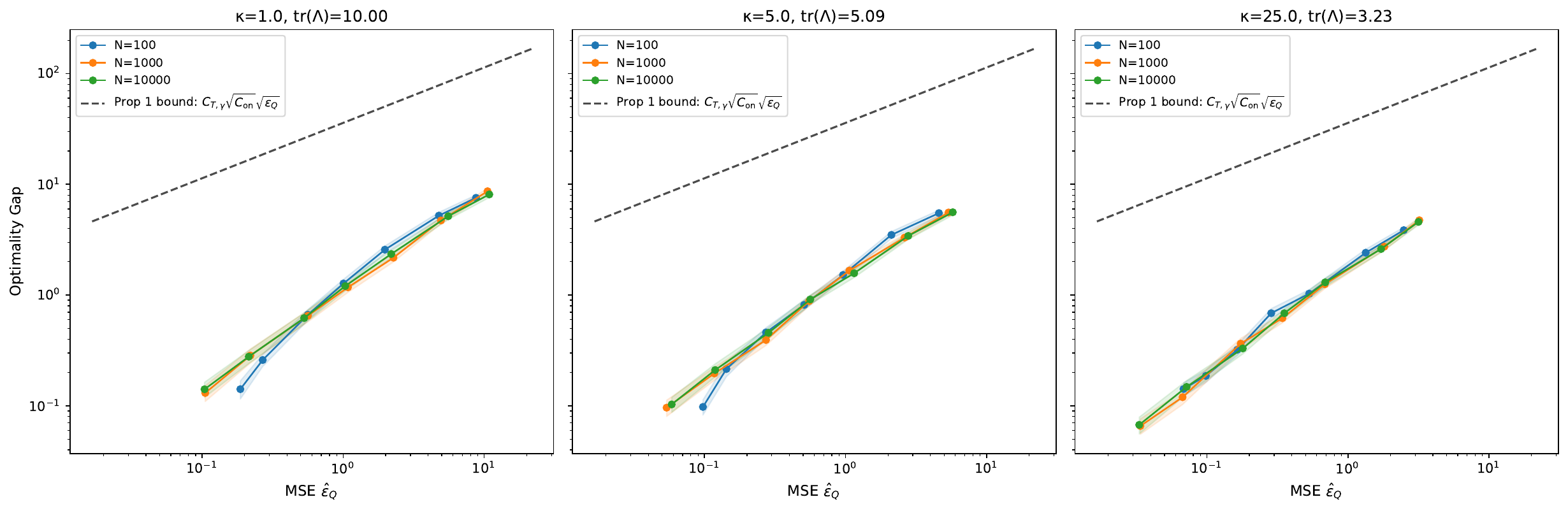}
\caption{
Empirical gap versus measured $\widehat{\varepsilon}_Q$ across
$\Lambda$, training length $N$, and support trajectory length
$M$. All curves remain strictly below the dashed reference line
$C_{T,\gamma}\sqrt{C_{\mathrm{on}}}\sqrt{\widehat{\varepsilon}_Q}$. The shaded areas represent the
$95\%$ confidence intervals.}
\label{fig:appendix_simlemma}
\end{figure}

\subsection{Joint sample complexity for an \texorpdfstring{$\varepsilon$}{epsilon}-optimal policy}
\label{subsec:sample_complexity}

The decomposition \eqref{eq:gap_squared_decomposition} gives a direct sufficient condition for achieving a target policy quality.

\begin{proposition}[Sample complexity for $\varepsilon$-optimal policies]
\label{prop:sample_complexity}
Fix $\varepsilon>0$. Under the conditions of Proposition~\ref{prop:policy_gap}, with $\gamma\to 1$, the induced policy $\widehat\pi$ satisfies
\[
\mathbb E
\left[
J_{\tau_{\mathrm{new}}}(\pi^*)
-
J_{\tau_{\mathrm{new}}}(\widehat\pi)
\right]
\le
\varepsilon
\]
whenever
\begin{equation}
K_{\mathrm{test}}
\ge
\frac{
8C_{\mathrm{on}}T(d+1)\operatorname{tr}(\Lambda)
}{\varepsilon^2}
\qquad
\text{and}
\qquad
K
\ge
\frac{
\sqrt{
8C_{\mathrm{on}}(1+2d+d^2\kappa)\operatorname{tr}(\Lambda)
}
}{\varepsilon}.
\label{eq:sample_complexity}
\end{equation}
In particular, for isotropic features $\Lambda=cI_d$ with $c=\Theta(1)$,
\begin{equation}
K_{\mathrm{test}}
=
O\!\left(\frac{C_{\mathrm{on}}T d^2}{\varepsilon^2}\right),
\qquad
K
=
O\!\left(\frac{\sqrt{C_{\mathrm{on}}}\,d^{3/2}}{\varepsilon}\right).
\end{equation}
\end{proposition}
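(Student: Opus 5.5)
We will derive the statement directly from the squared-gap decomposition \eqref{eq:gap_squared_decomposition}, which is the $\gamma\to1$ specialization of Proposition~\ref{prop:policy_gap}. With $C_{T,1}=2T$, $M=K_{\mathrm{test}}T$ and $N=KT$, squaring \eqref{eq:policy_gap_substituted} gives
\[
\bigl(\mathbb E[J(\pi^*)-J(\widehat\pi)]\bigr)^2
\le
4T^2 C_{\mathrm{on}}\left[\frac{(d+1)\operatorname{tr}(\Lambda)}{K_{\mathrm{test}}T}+\frac{(1+2d+d^2\kappa)\operatorname{tr}(\Lambda)}{K^2T^2}\right].
\]
Simplifying, this equals $A/K_{\mathrm{test}} + B/K^2$ with $A\triangleq 4C_{\mathrm{on}}T(d+1)\operatorname{tr}(\Lambda)$ and $B\triangleq 4C_{\mathrm{on}}(1+2d+d^2\kappa)\operatorname{tr}(\Lambda)$.

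Before using this, we must confirm that the bound is an honest inequality rather than merely the informal ``$\lesssim$'' of \eqref{eq:gap_squared_decomposition}. For $\gamma<1$ we have $C_{T,\gamma}=2(1-\gamma^T)/(1-\gamma)\le 2T$. The bound with constant $2T$ therefore holds for every $\gamma\in(0,1]$, and the $\gamma\to1$ qualifier only indicates tightness.

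The budget is split evenly between the two terms. It suffices to have $A/K_{\mathrm{test}}\le\varepsilon^2/2$ and $B/K^2\le\varepsilon^2/2$, since then the squared gap is at most $\varepsilon^2$. Because the expected gap is nonnegative ($\pi^*$ is optimal for each task), taking square roots gives a gap of at most $\varepsilon$. The first condition is equivalent to $K_{\mathrm{test}}\ge 2A/\varepsilon^2 = 8C_{\mathrm{on}}T(d+1)\operatorname{tr}(\Lambda)/\varepsilon^2$. The second is equivalent to $K\ge\sqrt{2B}/\varepsilon=\sqrt{8C_{\mathrm{on}}(1+2d+d^2\kappa)\operatorname{tr}(\Lambda)}/\varepsilon$. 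These match \eqref{eq:sample_complexity} exactly.

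For the isotropic corollary, take $\Lambda=cI_d$, so that $\kappa=1$ and $\operatorname{tr}(\Lambda)=cd=\Theta(d)$. Then $(d+1)\operatorname{tr}(\Lambda)=O(d^2)$, giving $K_{\mathrm{test}}=O(C_{\mathrm{on}}Td^2/\varepsilon^2)$. Likewise $(1+2d+d^2)\operatorname{tr}(\Lambda)=O(d^3)$, so its square root is $O(d^{3/2})$ and $K=O(\sqrt{C_{\mathrm{on}}}\,d^{3/2}/\varepsilon)$.

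The argument is elementary once Proposition~\ref{prop:policy_gap} is granted. The only step needing care is the bookkeeping of the factor $T$: the $T^2$ from $C_{T,1}^2$ must combine correctly with $M=K_{\mathrm{test}}T$ and $N=KT$. In the first term one factor of $T$ survives. In the second, $T^2$ cancels completely, which is why $K$ carries no $T$ dependence.
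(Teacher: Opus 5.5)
Your proposal is correct and is essentially the paper's proof. Both substitute $C_{T,1}=2T$, $M=K_{\mathrm{test}}T$, $N=KT$ into the squared bound, require each of the two terms to be at most $\varepsilon^2/2$, solve for $K_{\mathrm{test}}$ and $K$, and specialize to $\Lambda=cI_d$ (so $\kappa=1$ and $\operatorname{tr}(\Lambda)=cd$). Your added remark that $C_{T,\gamma}=2\sum_{t=0}^{T-1}\gamma^t\le 2T$ is a worthwhile refinement: it shows the conditions suffice for every $\gamma\in(0,1]$ and turns the paper's informal $\lesssim$ into an honest inequality.
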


\begin{proof}
Set each of the two terms in \eqref{eq:gap_squared_decomposition} to be at most $\varepsilon^2/2$ and solve for $K_{\mathrm{test}}$ and $K$. The isotropic specialization follows by substituting $\Lambda=cI_d$, which gives $\kappa=1$ and $\operatorname{tr}(\Lambda)=cd$.
\end{proof}

Proposition~\ref{prop:sample_complexity} highlights two qualitative differences between support trajectories and training trajectories. First, the required number of support trajectories grows with the rollout horizon $T$, while the required number of training rollouts per task does not after substituting $N=KT$. Second, $K_{\mathrm{test}}$ scales as $1/\varepsilon^2$, whereas $K$ scales as $1/\varepsilon$. This result suggests that, in the regime captured by the theory, increasing the amount of training data per task could be more sample-efficient than relying only on additional in-context demonstrations.

\subsection{Qualitative implications for POMDPs and APOMDPs}
\label{subsec:theory_pomdp_apomdp}

The linear MDP assumptions above are restrictive, but the decomposition in Proposition~\ref{prop:policy_gap} still gives useful qualitative guidance for the partially observed and ambiguous settings studied in our numerical experiments.

\paragraph{Partial observability.}
As discussed earlier, in POMDPs a sufficient statistic for decision-making is the belief state $b_t$. If we replace $\phi(s,a)$ with belief-action features $\phi(b,a)$, the same analysis applies formally only when the optimal belief-state Q-function is linear in those features and when the induced feature covariance is stable across support trajectories. As observation fidelity decreases, belief updates become noisier and the effective feature covariance can become more ill-conditioned, increasing $\kappa$ and the on-policy mismatch constant $C_{\mathrm{on}}$. Proposition~\ref{prop:policy_gap} then predicts larger optimality gaps and a larger benefit from additional training data.

\paragraph{Model ambiguity.}
In APOMDPs, the $\alpha$-MEU optimal Q-function need not be linear in a fixed feature map, and thus Proposition~\ref{prop:policy_gap} does not apply directly. Nevertheless, the same error decomposition suggests two relevant mechanisms. First, larger ambiguity sets can increase the complexity of the effective Q-function. Second, pessimistic or optimistic ambiguity criteria can change the induced visitation distribution, thereby changing the effective covariance and the on-policy mismatch constant. These effects are consistent with the finding of our numerical experiments that ambiguity size and ambiguity attitude jointly affect the optimality gap.

\section{Additional experimental results}\label{sec:appendix_additional_results}

\subsection{Comparison against DPT}\label{sec:appendix_dpt}

We construct the DPT training dataset using the trajectories $\{\mathcal{C}_{\tau}\}$ generated by Algorithm \ref{alg:construct_data}. Instead of using the serialized trajectories directly, we follow the approach in~\cite{DPT2023lee} to extract sequences of observations, actions, next observations, and rewards from the trajectories and organize them into a tabular format. Specifically, we construct the context based on $H_t$ and use the optimal action at time $t$ as the supervised label. We follow the same training procedure as described in~\cite{DPT2023lee} to train DPT. During evaluation, we use the same test tasks to roll out trajectories and compute the optimality gap. Figures~\ref{figure:mdp-pomdp-E3200} and~\ref{figure:apomdp-E3200} show the optimality gap across all three settings.

\paragraph{MDP.} In the MDP setting (Figure~\ref{figure:mdp-pomdp-E3200}, left), our approach achieves smaller optimality gaps than DPT across all planning horizons. The advantage is pronounced for longer horizons: at $T=10$, our approach reduces the gap from 26\% to 15\%, and at $T=15$ from 43\% to 15\%. At $T=5$, both approaches perform well, though our approach yields a smaller gap (3\% versus 7\%).

\paragraph{POMDP.} In the POMDP setting (Figure~\ref{figure:mdp-pomdp-E3200}, right), our approach outperforms DPT across all observation regimes. The gap is pronounced at $q=0.8$ and $q=1.0$, where our approach achieves optimality gaps of roughly 10\% and 9\% versus 25\% and 26\% for DPT, respectively. Even under the most challenging condition ($q=0.5$), our approach has a smaller gap (16\% versus 22\%).

\begin{figure}[thbp]
\centering
\begin{minipage}{0.49\textwidth}
\centering
\includegraphics[width=\linewidth]{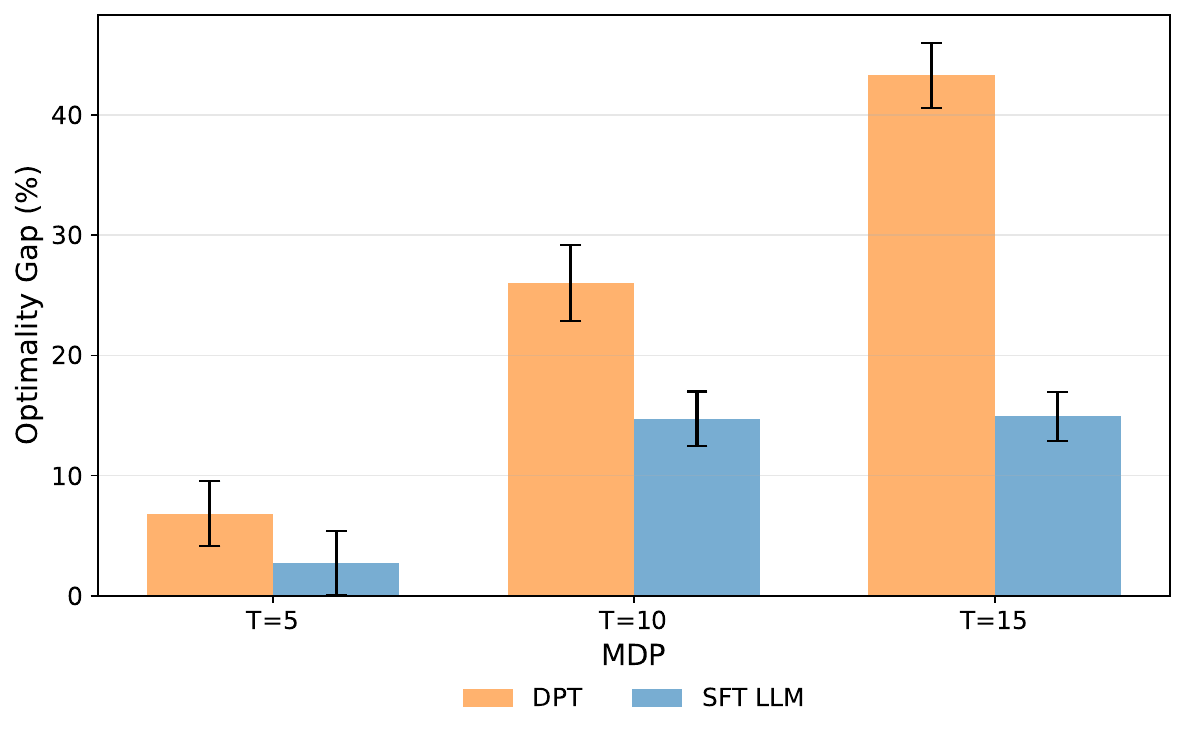}
\end{minipage}
\hfill
\begin{minipage}{0.49\textwidth}
\centering
\includegraphics[width=\linewidth]{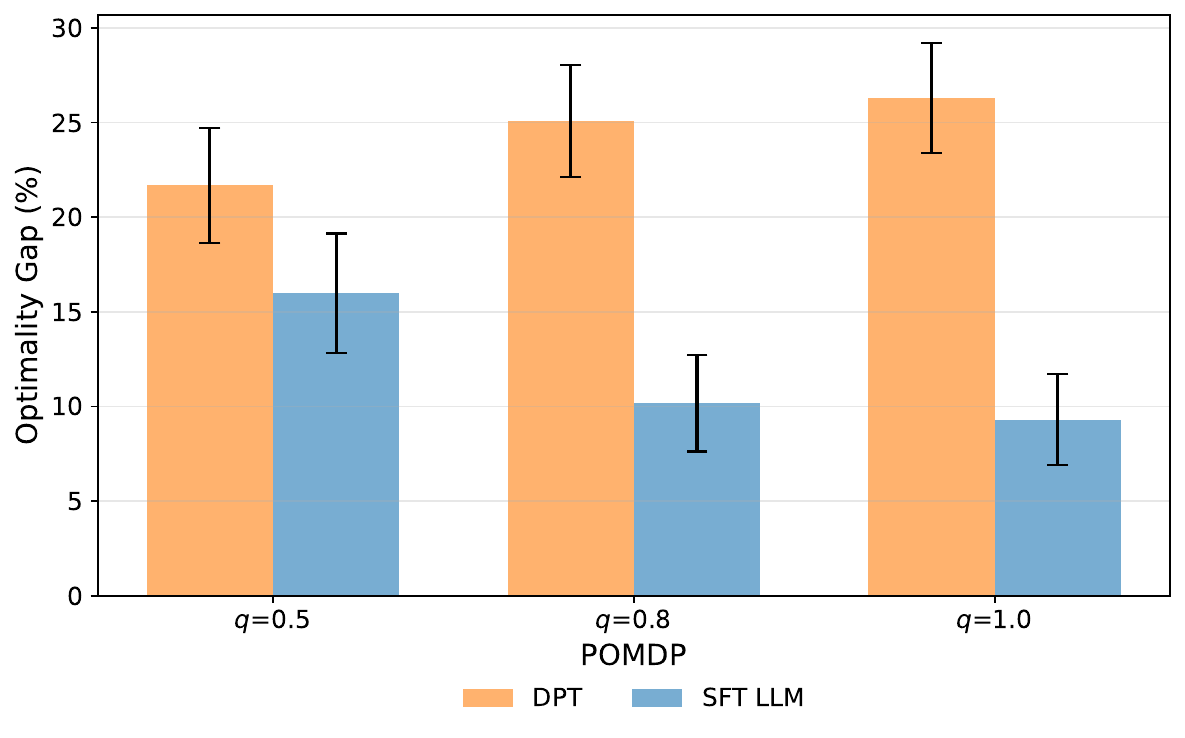}
\end{minipage}
\caption{Optimality gaps of our approach versus DPT with 3{,}200 training tasks for MDP (left) and POMDP (right). In the MDP panel, the three groups correspond to planning horizons $T \in \{5, 10, 15\}$. In the POMDP panel, $T=10$ and the three groups correspond to $q \in \{0.5, 0.8, 1.0\}$. Error bars represent 95\% confidence intervals.}
\label{figure:mdp-pomdp-E3200}
\end{figure}

\paragraph{APOMDP.} In the APOMDP setting (Figure~\ref{figure:apomdp-E3200}), our approach achieves smaller optimality gaps across all combinations of $\alpha$ and $|\mathcal{M}_\tau|$. Under $\alpha=0.5$, our approach maintains a consistent 10-percentage-point advantage. Under $\alpha=1.0$, the gaps for DPT grow faster (from 26\% to 37\%) than those for our approach (15\% to 22\%) when $|\mathcal{M}_\tau|$ increases.

\begin{figure}[thbp]
\centering
\begin{minipage}{0.32\textwidth}
\centering
\includegraphics[width=\linewidth]{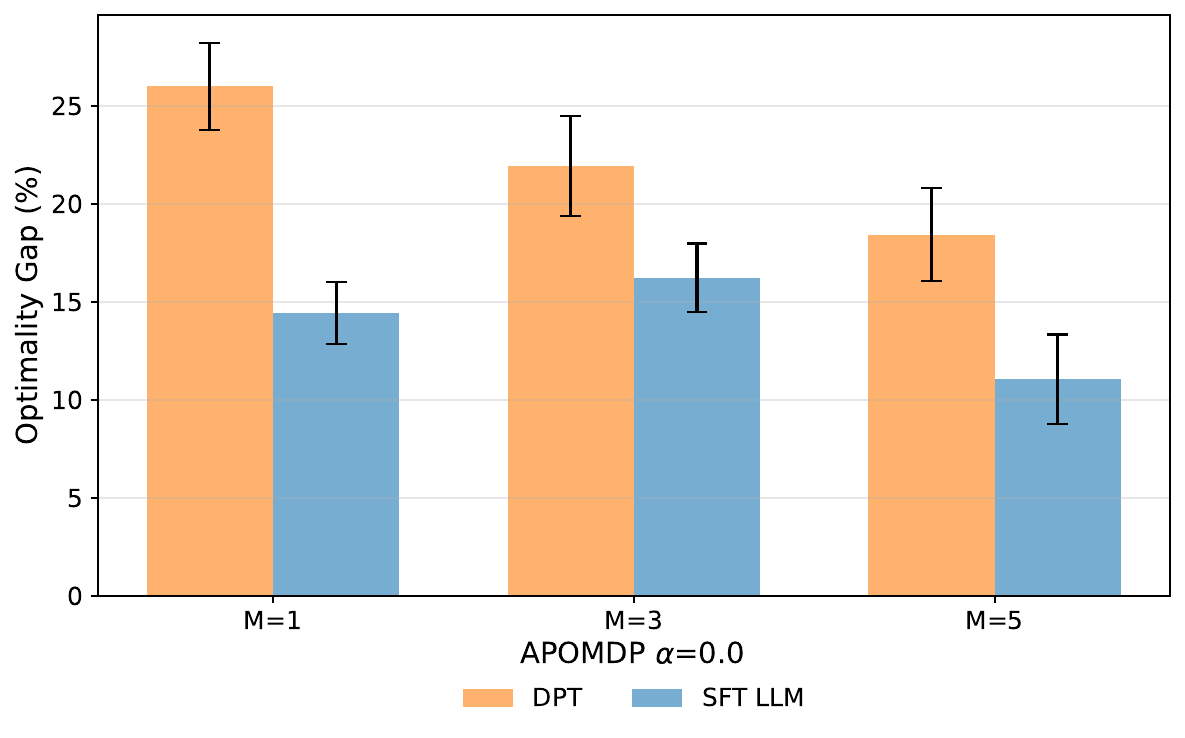}
\end{minipage}
\hfill
\begin{minipage}{0.32\textwidth}
\centering
\includegraphics[width=\linewidth]{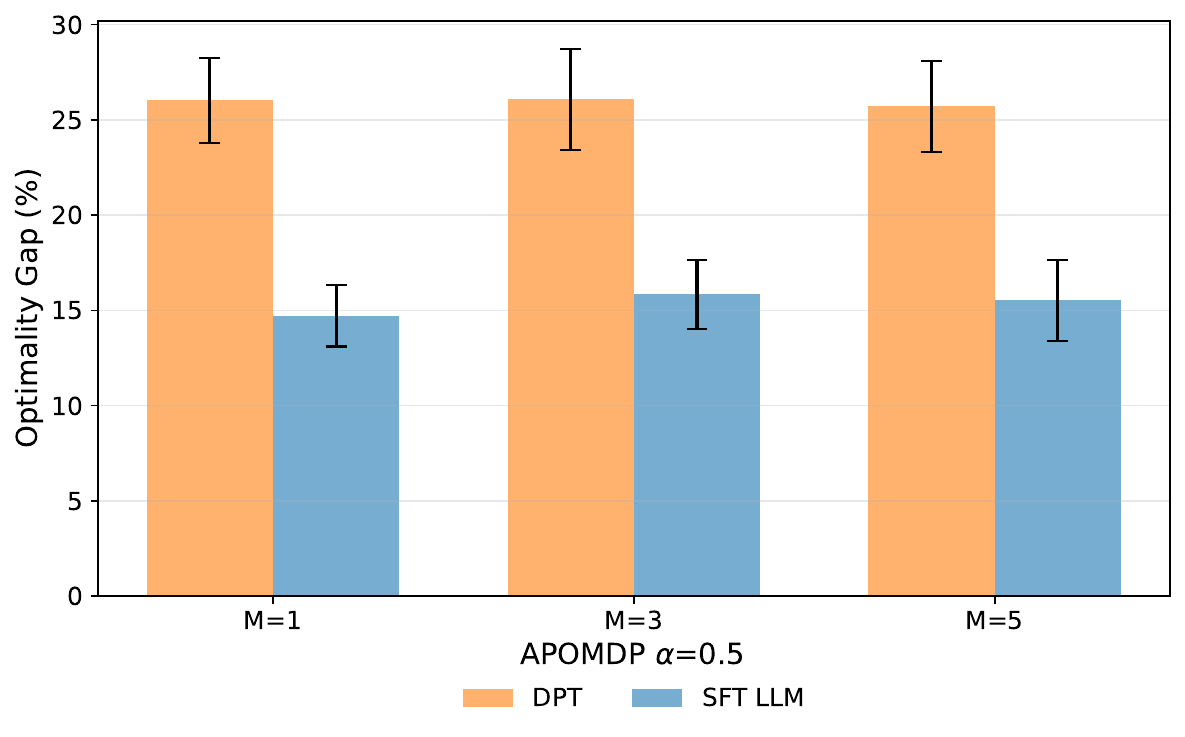}
\end{minipage}
\hfill
\begin{minipage}{0.32\textwidth}
\centering
\includegraphics[width=\linewidth]{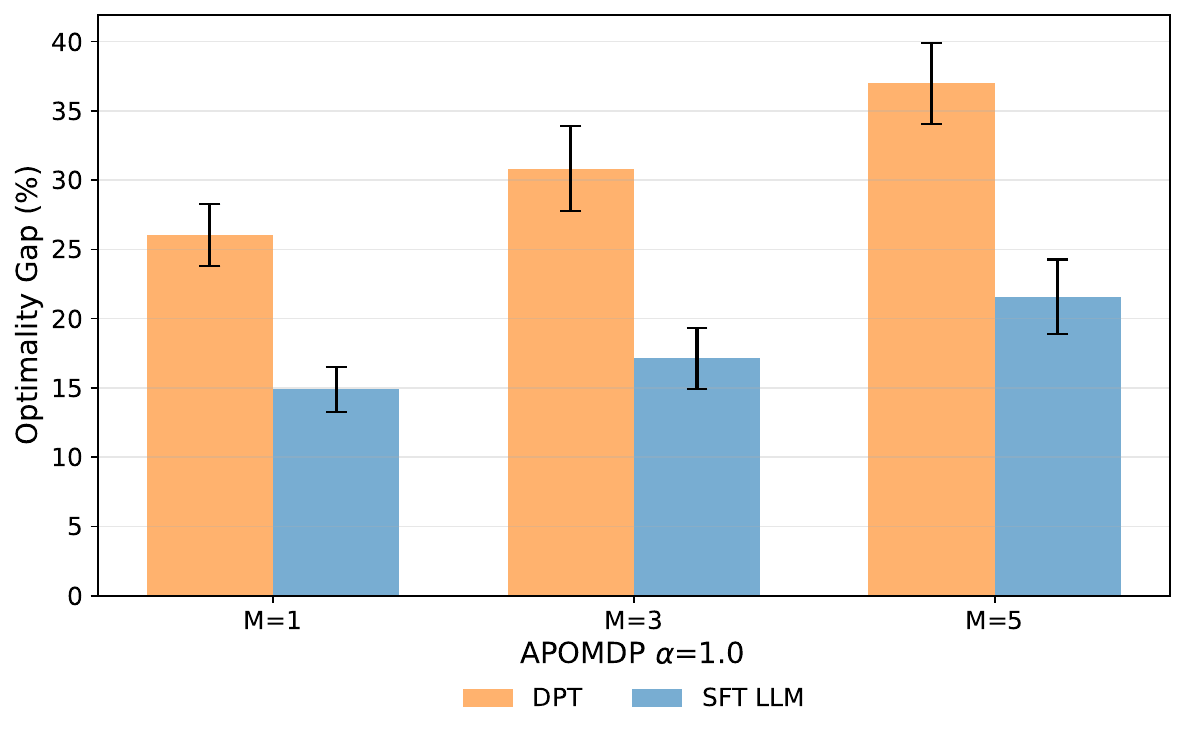}
\end{minipage}
\caption{Optimality gaps of our approach versus DPT with 3{,}200 training tasks for APOMDP. In all three panels, $T=10$ and $q=0.8$. The three panels correspond to $\alpha=0.0$ (left), $\alpha=0.5$ (center), and $\alpha=1.0$ (right). Within each panel, the three groups correspond to $|\mathcal{M}_\tau| \in \{1, 3, 5\}$. Error bars represent 95\% confidence intervals.}
\label{figure:apomdp-E3200}
\end{figure}

\subsection{Robustness to OOD test conditions}\label{sec:appendix_robustness}

We examine generalization beyond the training distribution with the model fixed after fine-tuning on 3{,}200 tasks. Figure~\ref{figure:robustness} shows optimality gaps for both in-distribution and OOD conditions.

\begin{figure}[thbp]
\centering
\begin{minipage}{0.32\textwidth}
\centering
\includegraphics[width=\linewidth]{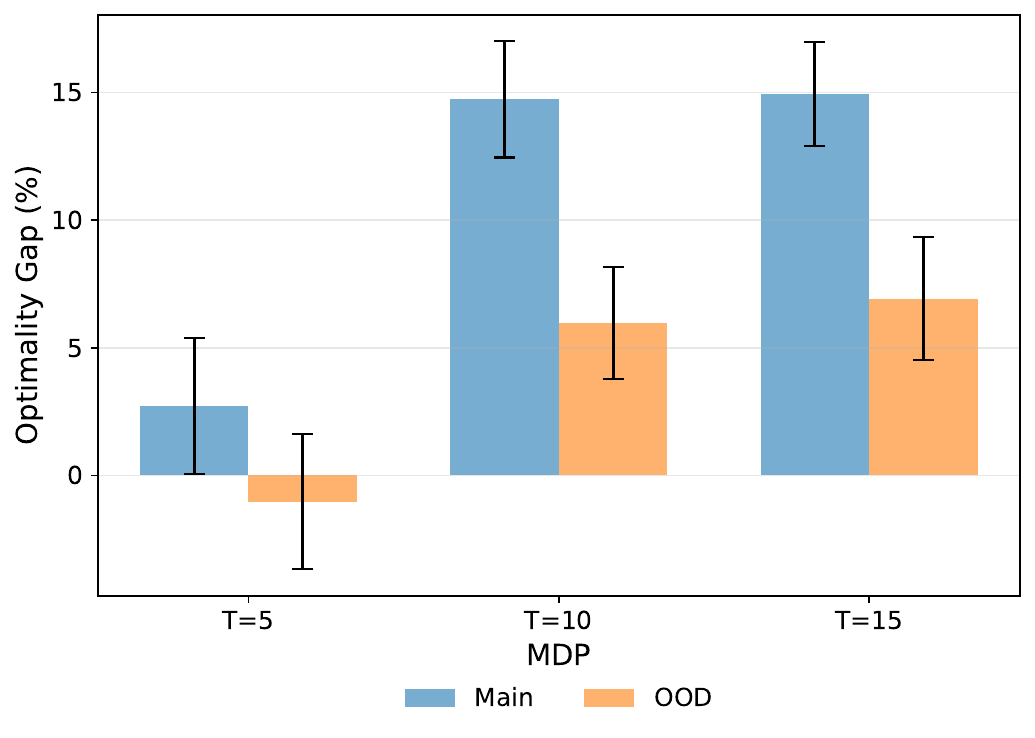}
\end{minipage}
\hfill
\begin{minipage}{0.32\textwidth}
\centering
\includegraphics[width=\linewidth]{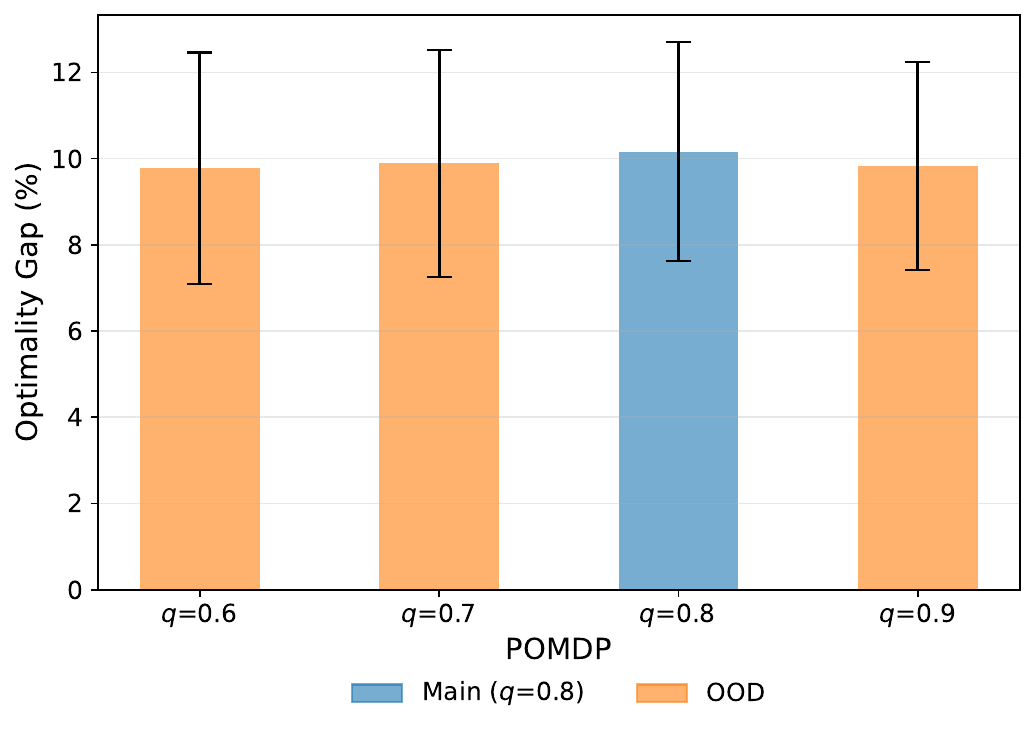}
\end{minipage}
\hfill
\begin{minipage}{0.32\textwidth}
\centering
\includegraphics[width=\linewidth]{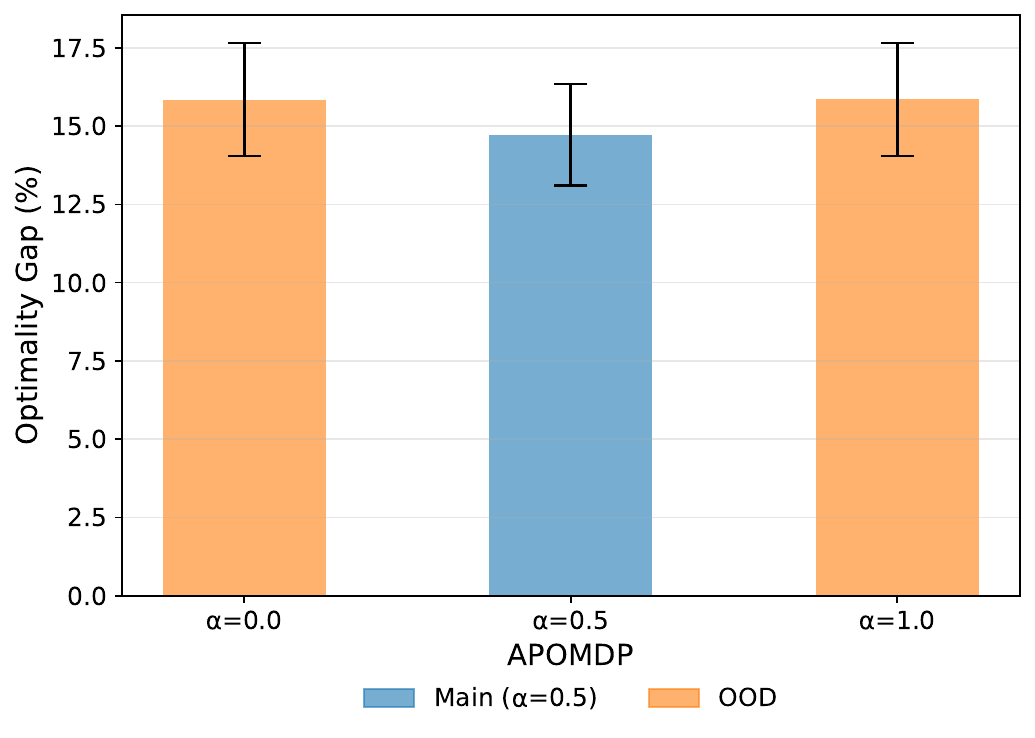}
\end{minipage}
\caption{Robustness of the fine-tuned LLM to the OOD test conditions with 3{,}200 training tasks for MDP (left), POMDP (center), and APOMDP (right). Blue bars: in-distribution. Orange bars: OOD. Error bars represent 95\% confidence intervals.}
\label{figure:robustness}
\end{figure}

\paragraph{MDP.} The model trained on $p \in [0.5, 1)$ is tested on $p \in [0.3, 0.5)$. The OOD gaps are small across all conditions: zero at $T=5$, 6\% at $T=10$, 7\% at $T=15$. The small OOD gaps may reflect that these test tasks are easier than the training tasks because smaller transition probabilities make state changes less frequent.

\paragraph{POMDP.} The model trained on $q=0.8$ is tested on $q \in \{0.6, 0.7, 0.9\}$. The OOD gaps are around 10\% in all conditions, suggesting the robustness to shifts in observation probability.

\paragraph{APOMDP.} The model trained on $\alpha=0.5$ is tested on $\alpha \in \{0.0, 1.0\}$. The OOD gaps are only marginally above the in-distribution gap.

\subsection{Quality of few-shot support trajectories}\label{sec:appendix_fewshot}

We examine whether demonstration quality affects performance by replacing rollouts under the optimal policy with rollouts under the random policy in the few-shot support trajectories. Figure~\ref{figure:fewshot} shows the optimality gaps for both conditions across all three planning horizons.

\begin{figure}[thbp]
\centering
\begin{minipage}{0.32\textwidth}
\centering
\includegraphics[width=\linewidth]{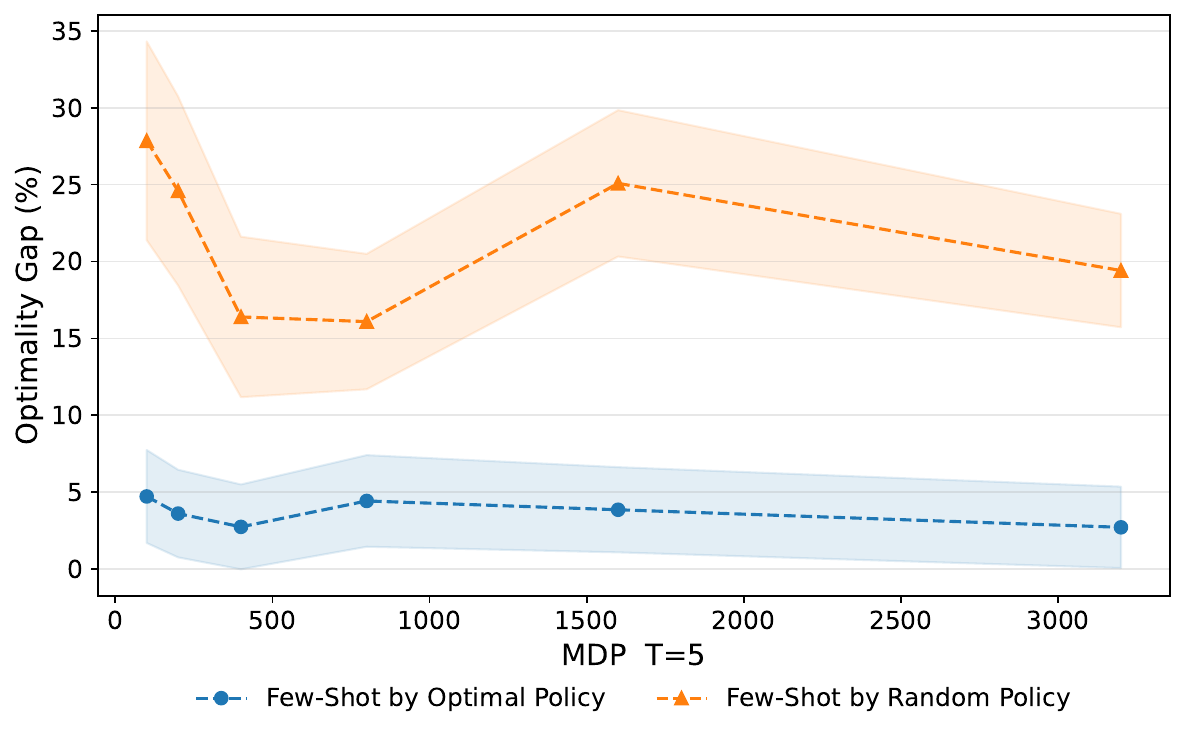}
\end{minipage}
\hfill
\begin{minipage}{0.32\textwidth}
\centering
\includegraphics[width=\linewidth]{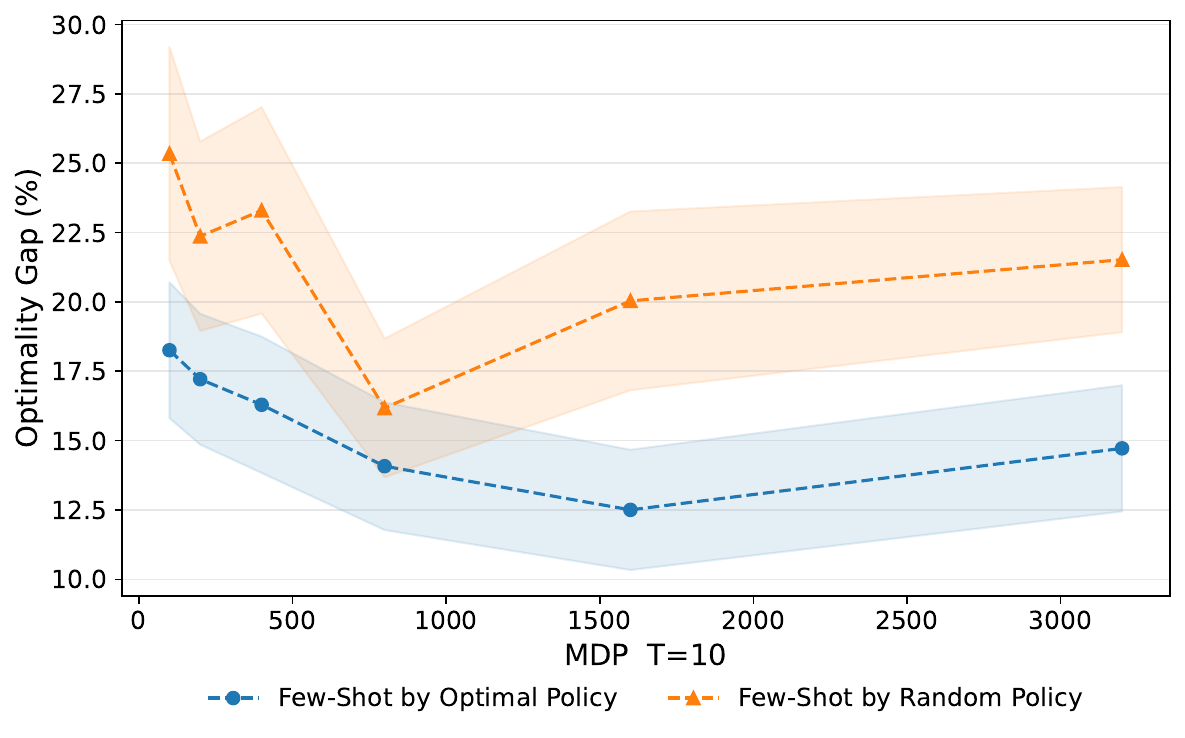}
\end{minipage}
\hfill
\begin{minipage}{0.32\textwidth}
\centering
\includegraphics[width=\linewidth]{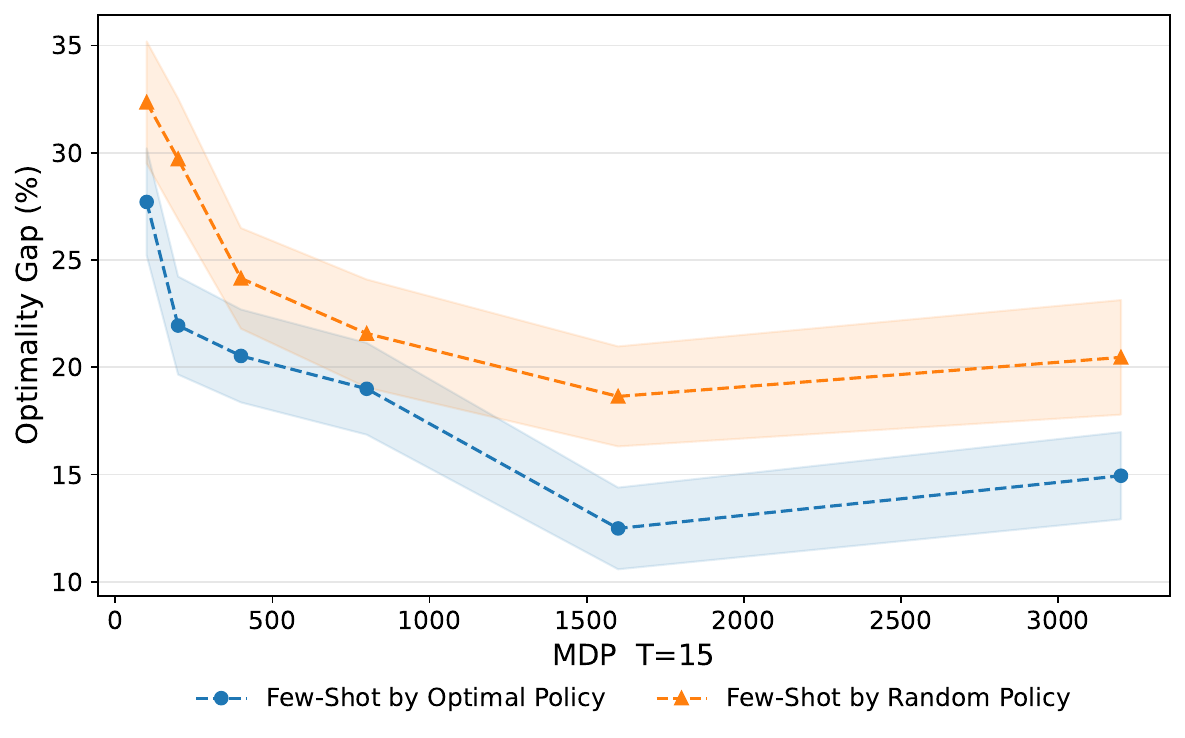}
\end{minipage}
\caption{Optimality gaps for few-shot support trajectories generated by the optimal policy (blue) and by the random policy (orange) under an MDP setting with $T=5$ (left), $T=10$ (center), and $T=15$ (right). The $x$-axis is the number of training tasks. The shaded areas represent the 95\% confidence interval.}
\label{figure:fewshot}
\end{figure}

The effect is pronounced at $T=5$. The gap is small (roughly 3\%--5\%) when the few-shot support trajectories are generated by the optimal policy, while it is around 20\% when they are generated by the random policy. At $T=10$ and $T=15$, we also see a gap of around 15\% for optimal support trajectories and 20\% for random support trajectories. These results suggest that demonstration quality could affect the performance of the fine-tuned LLM, and that the model can leverage high-quality demonstrations to achieve better performance. The gaps under a large number of training tasks suggest that the model may still benefit from high-quality support trajectories, even though we could provide more training tasks.

\subsection{Performance on Darkroom tasks}\label{sec:appendix_darkroom}

We apply our approach to the Darkroom task~\cite{DPT2023lee}, a $10 \times 10$ grid-world with hidden goals, sparse rewards (reward $+1$ at the goal and $0$ otherwise), and horizon $T=100$. We use 80 goal locations for fine-tuning and 20 for evaluation. Training trajectories are a mixture of 10 oracle and 100 random rollouts per goal. Test evaluations use five rollouts per goal, starting from $(0,0)$. The few-shot support trajectories are constructed from oracle rollouts for each test goal. The outcome of interest is the cumulative reward over the horizon. Figure~\ref{figure:darkroom} shows that the random policy achieves near-zero reward, while the oracle achieves a cumulative reward of approximately 92, and our fine-tuned LLM achieves approximately 87 (95\% of the oracle), suggesting the effectiveness of our approach in a challenging, partially observed environment with sparse rewards.

\begin{figure}[thbp]
\centering
\includegraphics[width=0.36\textwidth]{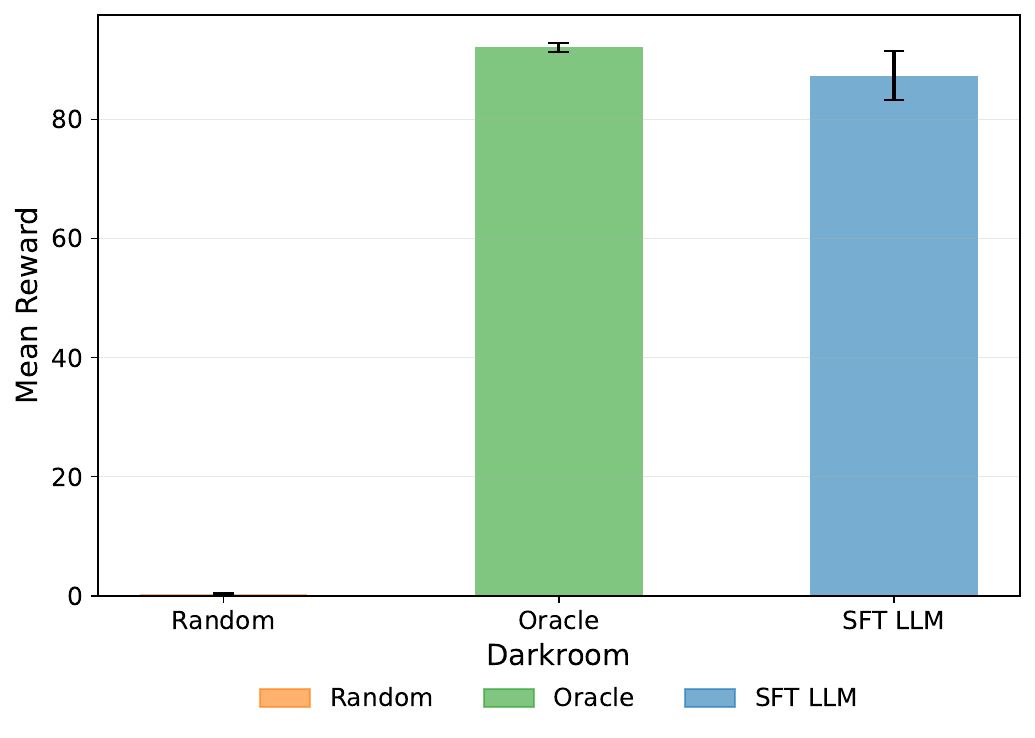}
\caption{Cumulative reward of the random policy, oracle, and our fine-tuned LLM for Darkroom tasks. Error bars represent 95\% confidence intervals.}
\label{figure:darkroom}
\end{figure}

\section{Limitations}\label{sec:appendix_limitations}

Our work has several limitations. First, supervised fine-tuning incurs nontrivial computational cost, which constrains the LLM sizes and the breadth of hyperparameters we are able to explore. Second, our framework relies on access to an optimal (or near-optimal) action oracle, which may be hard to compute once the problem becomes complex. Third, our theoretical analysis focuses on linear MDPs with a single-layer linear self-attention model; extensions to deeper architectures and to partially observed or model-ambiguous settings remain open (see Appendix~\ref{sec:appendix_theory_details} for partial qualitative results). Finally, our empirical evaluation is conducted on synthetic MDP, POMDP, and APOMDP settings, and validating the approach on real-world clinical datasets is an important next step.

\end{document}